\documentclass{article} 
\usepackage[preprint]{colm2026_conference}
\usepackage{lineno}

\usepackage[utf8]{inputenc} 
\usepackage[T1]{fontenc}    
\usepackage{hyperref}       
\usepackage{url}            
\usepackage{booktabs}       
\usepackage{amsfonts}       
\usepackage{amsthm} 
\usepackage{nicefrac}       
\usepackage{microtype}      
\usepackage{xcolor}         
\usepackage{enumitem}
\usepackage{natbib}
\usepackage[table, dvipsnames]{xcolor}

\usepackage{amsmath}
\usepackage{forest}
\usepackage{tikz}
\usetikzlibrary{decorations.pathreplacing, fit, backgrounds, calc}
\usepackage{adjustbox}

\usepackage[nameinlink,capitalize]{cleveref}
\definecolor{darkblue}{rgb}{0, 0, 0.5}
\hypersetup{colorlinks=true, citecolor=darkblue, linkcolor=darkblue, urlcolor=darkblue}

\newcommand{\lnorm}[1]{{\textsf{N}}(#1)}

\newcommand{\ninf}[1]
{{\left\|#1\right\|_{\max}}}

\usepackage[mathscr]{euscript}

\newcommand{\sft}[1]{\mathbb{S}\left(#1\right)}
\newcommand{\sfti}[2]{\mathbb{S}_{#1}(#2)}

\usepackage{mathtools}

\definecolor{darkred}{RGB}{150,0,0}
\definecolor{darkgreen}{RGB}{0,150,0}
\definecolor{darkblue}{RGB}{0,0,200}

\newtheorem{theorem}{Theorem}
\newtheorem{lemma}{Lemma}

\newtheorem{remark}{Remark}

\newcommand{\appropto}{\mathrel{\vcenter{
  \offinterlineskip\halign{\hfil$##$\cr
    \propto\cr\noalign{\kern2pt}\sim\cr\noalign{\kern-2pt}}}}}
    
\newcommand{\cut}[1]{\textcolor{red}{}}

\newcommand{\Lc}{\mathcal{L}}

\newcommand{\beq}{\begin{equation}}
\newcommand{\eeq}{\end{equation}}
\newcommand{\bea}{\begin{align}}
\newcommand{\eea}{\end{align}}

\newcommand{\R}{\mathbb{R}}

  \newcommand{\eps}{\epsilon}

\DeclarePairedDelimiterX{\inp}[2]{\langle}{\rangle}{#1, #2}

\title{Structure Before Collapse: Transient semantic geometry in next-token prediction}

\author{Yize Zhao\textsuperscript{1}, Isabel Papadimitriou\textsuperscript{2}*  \& Christos Thrampoulidis\textsuperscript{1}*
\\
\textsuperscript{1}Department of Electrical and Computer Engineering, \textsuperscript{2}Department of Linguistics\\
The University of British Columbia\\
\texttt{\{zhaoyize, cthrampo\}@ece.ubc.ca}\\
\texttt{isabel.papadimitriou@ubc.ca}
}

\newif\ifshownotes
\shownotestrue 

\begin{document}

\ifcolmsubmission
\linenumbers
\fi

\maketitle
{\renewcommand{\thefootnote}{}\footnotetext{$^\ast$Equal senior authorship.}}

\begin{abstract}
Neural Collapse predicts that balanced one-hot classification pushes model representations to be equally far from each other; a symmetric configuration that depends only on the output label and ignores any semantic similarity in the inputs. This creates a puzzle: next-token prediction language models are trained predominantly (as context length increases) with one-hot labels: the same context is very unlikely to appear twice in training with different labels. However, they clearly learn latent structural features. That is, despite the one-hot training regime, a language model's contextual embeddings represent the fact that the next word in ``Mary broke the \underline{\hspace{0.8
cm}}'' is likely to be filled by tokens in the latent classes of a) medium-sized, b) rigid, c) inanimate nouns. 
How does gradient descent find such categorical semantic structure when co-occurrence statistics collapse to one-hot sparsity, eliminating any shared next-tokens among different contexts? To investigate this tension we identify three synthetic controlled settings where inputs have latent semantic factors but are mapped to distinct one-hot labels. We find that semantic geometry emerges early in training, and that representations cluster by shared attributes despite receiving no explicit supervision to do so. This structure is transient: with sufficient capacity and time, the model eventually reaches the predicted symmetric state where all representations are equally separated. We study this phase transition through Gram matrix analysis and propose a preliminary modification to the commonly used unconstrained features model to capture the emergent semantic geometry.
\end{abstract}

\section{Introduction}
How do language models learn latent structure in language when, overwhelmingly, the long training contexts that they use to learn next-word-prediction are unique \citep{goodman2001bit, buck2014n, lu2025salieri}? Acquiring linguistic capabilities involves deducing the multiple levels of latent structure in linguistic input, from semantic similarity relationships to abstract syntactic structure to the subtle rules around rare, semi-fossilized constructions. 
Though in practice language models encode and use
much of this latent information,
this is in fact in tension with current theory.  For cases of balanced one-hot supervision, (where each context is only paired with one possible label in the training data) the literature on Neural Collapse \citep[$\mathcal{NC}$,][]{papyan2020prevalence} predicts that learnt representations should form a symmetric Simplex Equiangular Tight Frame (ETF) in which representations are
maximally separated and dictated solely by the one-hot labels and retain no trace of the underlying input semantics.
\footnote{Throughout, we use ``semantics'' as a shorthand to mean structured relationships of similarity between contexts, but we do not take a theoretical stance on whether the structures in our synthetic empirical setups solely reflect linguistic semantics, or if they also apply to syntactic and other relationships.  We discuss this in more detail in \hyperref[sec:exp3]{Experiment 3}.} 


{\emph{Where, then, does this semantic structure come from?} One classical answer
locates it in the distribution of co-occurring
labels: contexts with similar meanings tend to be followed by similar
distributions of words. A long line of distributional methods, from
$n$-gram models to word2vec, recovers semantic relationships by exploiting
exactly this label co-occurence signal \citep{mikolov2013distributed,levy,pennington2014glove}.
Recent theoretical work makes an analogous mechanism precise for next-token
prediction:  When a context is paired not with a single label but with a
sparse soft distribution over labels (e.g.,  ``San Francisco is famous for its'' followed by ``skyscrapers'', ``beaches'', and ``restaurants'', but not by ``sun''),
 shared labels act as a bridge that aligns the representations of distinct contexts, yielding
a meaningful and semantically rich geometry  \citep{zhao2024implicit, zhao2025geometry}.  
However, the probability of encountering these bridges through repeated contexts
vanishes as context length grows. The majority of next-token supervision in
modern long-context training therefore falls into the one-hot regime.} This brings us back to the central questions motivating our experiments: 

\begin{center} \emph{How can Gradient Descent find structured semantic geometry when the data is strictly one-hot? \\
\vspace{0.5ex} If the terminal state of optimization is a symmetric ETF that erases latent relationships {between labels}, under what conditions does such a  semantic geometry emerge?} \end{center}

In this paper, we aim to draw an empirical and theoretical bridge between the theory on Neural Collapse and the realities of language model training. We do this by proposing a set of controlled synthetic toy languages with known latent structuring similarities, and train a suite of transformer models to model them, observing the representations formed during training. For our toy languages, we can accurately describe a representational structure (in terms of a representational similarity, or, Gram matrix) that leverages the semantic similarities, and contrast it with the ETF representational structure that's predicted by the NC theory, that relies only on the one-hot label and not the structure of the context. This way, we can accurately track the training dynamics of structured semantic representation.

Our main finding is that, in the synthetic language learning experiments that we run, \textbf{Gradient Descent recovers structured semantic geometry {organized by shared latent factors} early in training} (despite purely one-hot next-token supervision) before collapsing to a terminal ETF geometry {organized only by output labels}. 
Our contributions are summarized as follows:

\begin{enumerate}[leftmargin=*]
    \item We identify a previously unexamined contradiction: Neural Collapse theory predicts that one-hot supervision erases latent semantic structure from representations, yet language models trained under effectively one-hot regimes demonstrably learn such structure.
    
    \item We introduce an empirical paradigm 
    for tracking whether learned representations reflect latent semantic structure or collapse to the ETF geometry predicted by $\mathcal{NC}$: Representational Similarity Analysis (RSA) to latent semantic categories in synthetic languages. 
    
    \item We identify three synthetic languages where optimization recovers structured, semantic geometry, despite purely one-hot next-token supervision, and show that a) this semantic geometry is a transient phase occurring early in training, and  b) given sufficient model capacity and training time, it eventually yields its place to a symmetric ETF, reconciling our findings with standard $\mathcal{NC}$ theory. 



    \item We present a simplified mathematical model that empirically and theoretically successfully reproduces this transient semantic alignment.

\end{enumerate}

\begin{figure}[t]
    \centering
    \vspace{-1.1cm}
    \makebox[\linewidth][c]{%
        \begin{minipage}{1.2\linewidth} 
            \includegraphics[width=\linewidth]{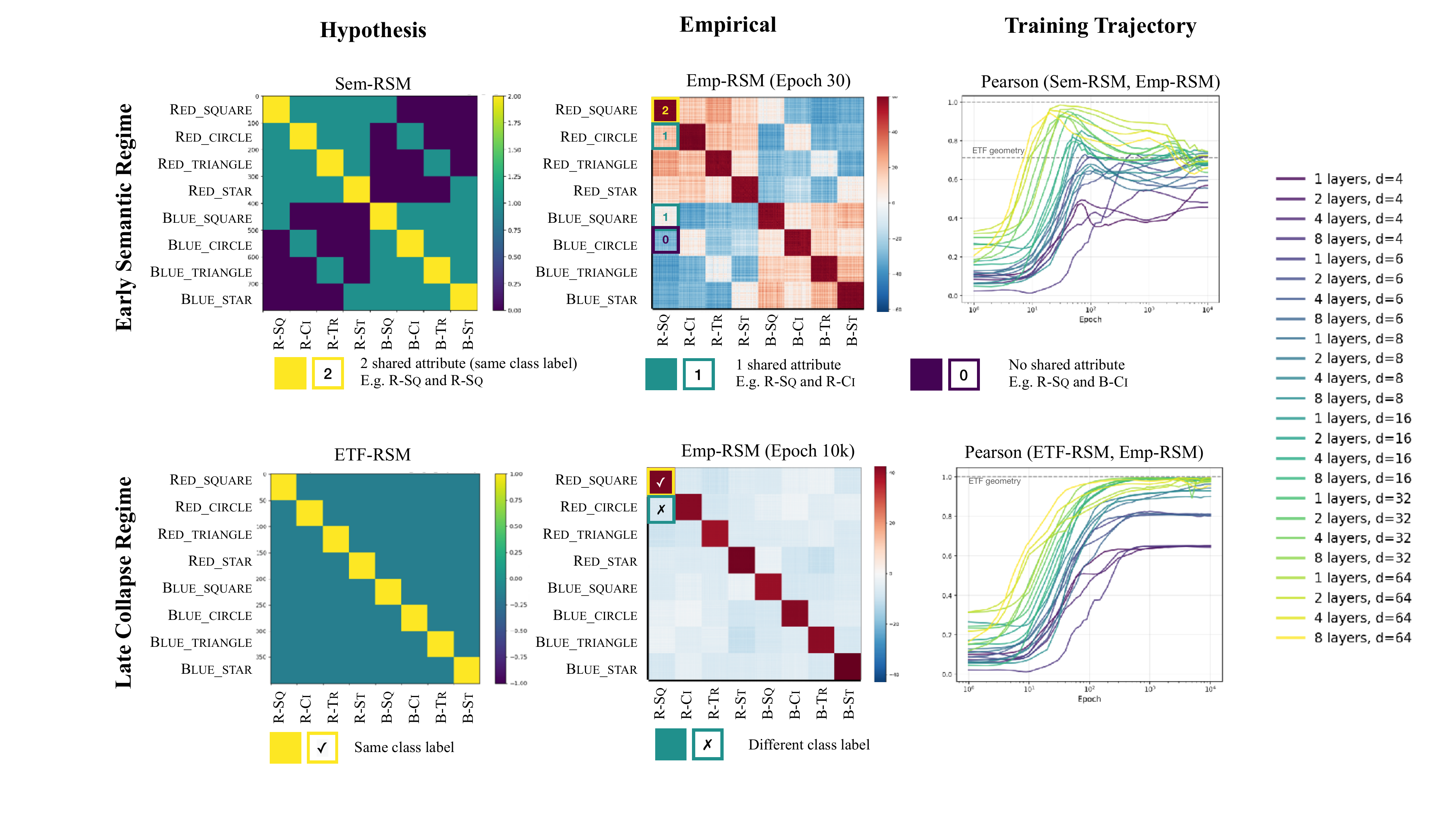} 
        \end{minipage}
    }
    \vspace{-0.2in}
\caption{
\textbf{Transient semantic geometry emerges before collapse} (Experiment~1). Sample-level $n\times n$ RSMs over the full training set, ordered by output label ($n=800$, $K=8$).
\textbf{Left column:} hypothesis RSMs. Sem-RSM encodes latent semantic overlap between class pairs, with values $2$, $1$, and $0$ equal to the number of shared attributes. ETF-RSM encodes the label-driven collapsed geometry predicted by Neural Collapse.
\textbf{Middle column:} empirical RSMs (Emp-RSM) from the largest model (8 layers, $d=64$) at epochs 30 and 10k. Early in training, the empirical geometry reflects the semantic hierarchy in Sem-RSM; by the end of training, this graded structure weakens and Emp-RSM moves toward the binary same-label versus different-label pattern of ETF-RSM.
\textbf{Right column:} Pearson correlation between Emp-RSM and each hypothesis RSM over training, across model depths and widths. Correlation with Sem-RSM rises early and then declines, while correlation with ETF-RSM increases later, especially for larger models, indicating a transition from an early semantic regime to a late collapse regime.}
        \vspace{-0.2in}
            \label{fig:main_results}
\end{figure}

\vspace{-0.15in}
\section{Background and related work}
\vspace{-0.05in}

\subsection{Neural Collapse geometries}

We use the term NC geometry to refer to the geometric  arrangement of last-layer representations of a deep-net in the Terminal Phase of Training, where training loss is driven to zero. Under class-balanced one-hot labeled data in $K$ balanced classes, \citet{papyan2020prevalence} found empirically that cross-entropy training yields a  highly-symmetric NC geometry: a simplex equiangular tight frame (ETF) Gram matrix:  
$G_\text{ETF}:=(I_K-\frac{1}{K}1_K1_K^\top)\otimes 1_{n/K}1_{n/K}^\top$. 
Embeddings of examples belonging to the \emph{same} class collapse to the same representation, and embeddings of examples belonging to \emph{different} classes are equally (in fact, maximally) separated from each other: representations of images of ``cars'' are equally separated from representations of images of ``trucks'' as of ``dogs''.

Two theoretical directions have shown the $\mathcal{NC}$ geometry to have more structure. Firstly, when classes are imbalanced, the geometry converges to a generalization of ETF, the SELI geometry, where similarity relations are structured according to output label frequency similarities \citep{thrampoulidis2022imbalance}. Secondly, in cases of sparse soft-labeled (rather than one-hot) data,  the resulting geometry is the spectral components of a context-word co-occurrence matrix \citep{zhao2024implicit,zhao2025geometry}. In both cases, however, the \textit{sole driver of the similarity structure} continues to be the output labels. A key factor in this limitation is the unconstrained features model \citep[UFM,][]{mixon2020neural,fang2021exploring,zhu2021geometric}, which is predominantly employed in $\mathcal{NC}$ theory and drives the above geometry characterizations but ignores input-data structure by design.
Prior work has empirically examined the role of input semantics in $\mathcal{NC}$ in the context of one-hot image classification \citep{yang2023neurons, jiang2024generalized}, showing that in various empirical settings, deep-net classifiers trained on CIFAR labels (or, artificial super-class labels) lead to semantic representations reflecting class relations, seemingly contradicting the ETF theory.

Although we study language data, our findings shed light on the questions left open by these prior works: semantic geometry does appear in the representations of deep-nets, but as a transient phase early in training, before eventually collapsing to ETF.

\vspace{-0.1in}
\subsection{Using synthetic languages to understand language model learning}
\vspace{-0.1in}
Our methodology builds on a line of inquiry in language model interpretability of training models on carefully controlled synthetic data in order to understand their language learning capabilities. Representational analyses on models trained on Markov chains \citep{shai2024belief}, hierarchical grammars \citep{murtycharacterizing, AllenZhu2023PhysicsOL, schulz2025unraveling, cagnetta2024towards}, language-like templates \citep{ahuja2025learning, mccoy2020does, hupkes2020compositionality, petty2021transformers}, and simple synthetic worlds \citep{li2021implicit, li2023emergent, lampinen2025generalization} show how models form latent generalizations that correspond to the data-generating process, while work that investigates how these latent abstractions connect to natural language trains models on both synthetic and natural languages in controlled contexts \citep{papadimitriou-jurafsky-2020-learning, papadimitriou2023injecting, hu2025between, lee2026training}. To understand more linguistically-grounded aspects of learning, \citet{kallini2024mission}, \citet{misra2024language}, and \citet{patil2024fict} train models on synthetically altered versions of natural language, to pinpoint specific learning effects. Recent papers have also used synthetic languages to understand the training dynamics of generalization \citep{qin2024sometimes, zucchet2025language, wang2024rich, zhao2026shattered, gu2025mixing, behnia2025facts, vasudeva2026understanding}. 
These papers establish that trained language models often contain geometrically organized semantic representations. We ask a complementary question: when such geometry appears during training, and how it evolves under strictly one-hot next-token supervision.

\vspace{-0.1in}
\subsection{Representation geometry in language models}
\vspace{-0.1in}

{That semantic and syntactic structure appears geometrically in language representations
was established early in distributional word representations, where such regularities in embedding space are driven by word co-occurrence statistics
\citep{mikolov2013linguistic,pennington2014glove,levy,arora2016latent,levy_semantics}; probing and visualization studies carry this into contextual representations
\citep{hewitt2019structural,tenney2019bert,reif2019visualizing}, and more recent work
finds semantic information, latent task or world states, and interpretable features
organized geometrically in trained models
\citep{li2021implicit,nanda2023emergent,shai2024belief,gurnee2024language,marks2024geometry,wu2025semantic,cunningham2023sparse,elhage2022toy,park2024linear,park2025categorical,jiang2024origins,engels2024not}. Throughout this tradition, semantic geometry is tied to token co-occurrence statistics.
The line of work most directly connected to ours brings this connection of semantic geometry to co-occurrence statistics into the
next-token-prediction setting \cite{ntp,zhao2024implicit}, where the soft distribution over next tokens acts as a
bridge aligning the representations of distinct contexts into a semantic geometry
\citep{zhao2025geometry,yao2025probability,karkada2026closed,karkada2026symmetry,noroozizadeh2025geometric,behnia2026cross}.
We instead ask here \emph{when} semantic geometry
emerges without it, and how it evolves relative to the label-driven geometry of neural collapse.
}

{Beyond the geometry of trained models, a related line of work asks how representations evolve during training. Studies have examined how the representation geometry changes during different phases in training \citep{li2025tracing}, and more specifically how linguistic features emerge, are expressed in representation and weight space, and connect to overall model loss \citep{saphra2019understanding, chen2024sudden, zhao2025geometry}. We study this dynamics of representation geometry during training in controlled one-hot settings, where the latent semantic structure is known and can be directly compared against the collapse geometry predicted by neural collapse.}

\vspace{-0.12in}
\section{Overview of our empirical approach}
\label{sec:empirical-overview}
\vspace{-0.07in}
We develop three controlled settings where diverse input contexts share latent semantic features but are mapped to one-hot targets. All three toy languages  share this latent-semantics-yet-one-hot character, but they get progressively more language-inspired (and therefore complex). We train language models of varying widths and depths on our toy languages, and observe their training trajectories. All of our toy languages are posed to the model as a single prediction per example (predicting the next word of a sequence),  rather than using a full next-token loss over all sequence positions. In this way, we can ensure that the context-label distribution is strictly one-hot, making the one-hot nature of the supervision as explicit as possible. 

\subsection{Training setup and independent variables: Model depth, width, and training time}
Across all experiments, we train standard transformer decoders on the synthetic languages and track embedding geometry throughout training. For our independent variables, we sweep across different model depths, embedding dimensions, and measure our dependent variables along training. (Specific details in \Cref{app:experimental-details}).

\subsection{Dependent variable: Representational similarity to semantic and collapsed geometries}

To answer ``what similarity structure do the context embeddings of a language model have?,'' we measure the representational similarity between how the models encode the inputs, and (a) the latent semantic similarities of data versus (b) the surface-level output label class. 

Representational Similarity Analysis \citep[RSA,][]{kriegeskorte2008representational, nili2014toolbox} is a method comparing two spaces that both have a similarity metric defined.
RSA works by measuring the correlation between pairwise Representational Similarity Matrices (RSMs) for a number of input samples, $n$.\footnote{RSA is often defined in terms of dissimilarity matrices (rather than similarity matrices), but this is equivalent. When our similarity metric is an inner product, the RSM is a Gram matrix.
}
We track the Pearson correlation between the RSM arrived at empirically by our models (Emp-RSM) with two hypothesis RSMs: a semantic RSM that depends on the language's latent semantic classes (Sem-RSM), and an equiangular tight frame RSM that puts maximal distance between all inputs with different labels (ETF-RSM). 

\textbf{Sem-RSM}\; We define a similarity metric over input context strings
based on the number of shared latent classes that a context selects for (eg, if one context selects for small, inanimate nouns and another selects for large, inanimate nouns, they will have a similarity of 1 shared latent class).
Since we perfectly know the data-generating process of our toy languages, we can construct 
an idealized representational similarity matrix where \textit{the only factors} influencing the similarity between two context representations are latent semantic classes. 

\textbf{ETF-RSM}\; We also create an alternative hypothesis RSM: that of the simplex equiangular tight frame geometry (ETF-RSM). In this case, two input contexts have a similarity of 1 if they take the same output label, and a similarity of 0 otherwise, ignoring any latent semantics that contribute to the contexts.\footnote{Though, of course, output labels do contain some level of semantic information, we contrast this to the Sem-RSM which represents the full latent system underlying the data-generating process} In this idealized RSM, the only feature influencing the similarity between two contexts is their output label.\footnote{The ETF geometry assumes balanced output labels, i.e. equal number of input contexts per label. In our experiments with imbalanced output labels, we use the SELI geometry generalization of ETF, though we maintain the naming ETF-RSM since the common feature of both geometries distinguishing them from Sem-RSM, is that they are solely driven by output labels. }


\textbf{Emp-RSM}\; We compare  these two hypothesis RSMs to an empirical RSM of the model context embeddings (Emp-RSM). The RSM of our context embeddings is simply the Gram matrix of embeddings. For each model, at each training epoch, we take the model's last-layer representations of every input in our training data, and construct the Gram matrix of their inner products. The two dependent variables that we measure are the Pearson correlations between each model checkpoint's context representations and the Sem-RSM and ETF-RSM. 

\textbf{Complementary Metrics.}\; 
{In addition to the core metrics listed above, we use the following complementary diagnostics to separate semantic structure from label-driven collapse.}

{\emph{Disentangling semantic and label effects. }To separate semantic alignment from same-label collapse, we use two different-label diagnostics. The first compares whether different-label pairs that share a latent factor are more similar than different-label pairs that share none. The second recomputes semantic RSA after removing all same-label pairs. Complementary to our main results, these checks, that are presented in \Cref{app:cross_label_diagnostics}, confirm that the semantic signal reflects graded structure across labels, rather than only clustering within each exact output label.}

{\emph{NC-style collapse diagnostics. } We provide direct collapse-style measurements in \Cref{app:nc_diagnostics} and \Cref{app:numerical}. First, we report standard NC1 and NC2 diagnostics, measuring within-label variability and convergence of class means toward the ETF geometry \citep{papyan2020prevalence}. Second, we adapt this idea to semantic attributes, measuring whether examples sharing the same latent attribute form tight clusters. The results support the same training picture as RSA: semantic organization is strongest early, while label-driven collapse strengthens later.}

{\emph{Cross-layer geometry. }In \Cref{app:layerwise_geometry}, we repeat the semantic and label-driven measurements across layers, showing that label-driven geometry is generally strongest near the readout, while semantic structure can remain visible in earlier and intermediate layers. A detailed investigation of the geometry evolution across layers is left to future work. }





In the following sections, we describe the three synthetic languages that we construct, and report how  model embedding geometries correlate with Sem/ETF-RSM.

\vspace{-0.1in}
\section{Experiment 1: Color-Shape categories language}
\label{sec:exp1}
\vspace{-0.05in}
\paragraph{Language definition}
Our first synthetic language is a minimal setting where the output labels depend on the cross of two latent semantic factors. Each input is defined by two factors (in our case, color and shape), and each output label reflects only the cross product of the two (eg, \textsc{Red\_Circle} is a one-hot label that only fires for inputs that are in \textit{both} \textsc{Red} and \textsc{Circle}) More formally:

\textit{Latent Factors.}
We define a Color Factor containing attributes $\mathcal{A}_{\mathcal{C}} = \{\textsc{Red}, \textsc{Blue}\}$ and a Shape Factor ($F=\mathcal{H}$) containing attributes $\mathcal{A}_{\mathcal{H}} = \{\textsc{Circle}, \textsc{Square}, \textsc{Triangle}, \textsc{Star}\}$.

\textit{Semantic Class. }We define a Semantic Class $\mathcal{S}_a$ as the set of all samples sharing a specific attribute (like, "Red"), regardless of the other factors. 
These classes are not mutually exclusive: every sample belongs to exactly one Color class and one Shape class simultaneously.

\textit{Label Class.} The optimization objective is defined by the Label Class $\mathcal{Y}$. A label class $y_k$ corresponds to the unique intersection of semantic factors, $y_k = (a_c, a_h)$, resulting in $K = |\mathcal{A}_{\mathcal{C}}| \times |\mathcal{A}_{\mathcal{H}}| = 8 $ mutually exclusive one-hot targets. For example, a sample that is in $\mathcal{S}_{\textsc{Red}}$ and in $\mathcal{S}_{\textsc{Circle}}$ will always get the label $y = \textsc{Red\_Circle}$, which in terms of the label one-hot distribution has no relation to other labels like $\textsc{Red\_Square}$.

\textit{Context Samples.} Each input is a triplet $x = [t_{\text{ID}}, t_{c}, t_{h}]$. To ensure the model doesn't simply memorize, we introduce variability within each class in the form of nuisance IDs ($t_{\text{ID}}$) for each sample.
The contextual tokens $t_c, t_h$ are drawn from sets of fine-grained variants (e.g., when $a=\textsc{Red},$  the variants are $\{\textsc{Dark\_Red, Light\_Red, Medium\_Red}\}$), so the model never observes the coarse latent factors directly.
The input vocabulary ($V=818$) comprises of 800 unique Nuisance IDs and 18 Contextual Tokens (3 fine-grained variants for each of the 2 color and 4 shape attributes). The total dataset consists of $n=800$ samples. Putting this all together, an example of our prediction objective is:
\begin{align*}
    x = [768\quad \textsc{Dark\_Red}\quad \textsc{Small\_Circle}], \qquad y = \textsc{Red\_Circle}\,.
\end{align*}

\paragraph{Results}

We present our results in \Cref{fig:main_results}. We track the models' representational similarity to the two hypothesis RSMs on the left: Sem-RSM and
ETF-RSM.
Our main result is that models transiently reach maximal similarity to semantic representation early in training (right, top), and then collapse away to the ETF geometry, which they approach monotonically throughout training (right, bottom). We also observe that the largest models (yellower lines) are the most effective at finding both the semantic geometry early in training, and the ETF geometry at the terminal phase, so semantic geometry is not a result of representational bottlenecking. 
Surprisingly we find that the RSA values with Sem-RSM for some models are close to 1,
meaning that the \textit{only} representational structure in the model is our hypothesized semantic structure, and that the model is encoding almost no spurious training artifacts.

\begin{figure}[t]

  \centering
  \begin{tikzpicture}[
    box/.style={draw, fill=white, minimum width=0.55cm, minimum height=0.55cm, inner sep=0pt, font=\scriptsize},
    topbrace/.style={decorate, decoration={brace, amplitude=4pt, raise=2pt}, line width=0.5pt},
    botbrace/.style={decorate, decoration={brace, mirror, amplitude=4pt, raise=2pt}, line width=0.5pt},
  ]
    \def\sp{0.65}

    \node[anchor=east, font=\large] at (-0.3, 0) {$V =$};

    \foreach \i in {0,...,19} {
      \node[box] (t\i) at (\i*\sp, 0) {\i};
    }

    \begin{pgfonlayer}{background}
      \node[draw=none, fill={rgb,255:red,197;green,231;blue,216},
            rounded corners=3pt, inner sep=2pt, fit=(t0)(t1)] {};
      \node[draw=none, fill={rgb,255:red,220;green,240;blue,220},
            rounded corners=3pt, inner sep=2pt, fit=(t2)(t3)] {};
      \node[draw=none, fill={rgb,255:red,220;green,232;blue,245},
            rounded corners=3pt, inner sep=2pt, fit=(t4)(t7)] {};
      \node[draw=none, fill={rgb,255:red,255;green,240;blue,220},
            rounded corners=3pt, inner sep=2pt, fit=(t8)(t11)] {};
      \node[draw=none, fill={rgb,255:red,245;green,235;blue,228},
            rounded corners=3pt, inner sep=2pt, fit=(t12)(t19)] {};
    \end{pgfonlayer}

    \draw[line width=0.5pt]
      ([yshift=2pt]t0.north west) |- ([yshift=6pt]$(t0.north west)!0.5!(t1.north east)$) -| ([yshift=2pt]t1.north east);
    \node[above=8pt, font=\scriptsize\bfseries, text={rgb,255:red,0;green,100;blue,0}]
      at ($(t0.north)!0.5!(t1.north)$) {\textsc{Det}};

    \draw[line width=0.5pt]
      ([yshift=2pt]t2.north west) |- ([yshift=6pt]$(t2.north west)!0.5!(t3.north east)$) -| ([yshift=2pt]t3.north east);
    \node[above=8pt, font=\scriptsize\bfseries, text={rgb,255:red,0;green,100;blue,0}]
      at ($(t2.north)!0.5!(t3.north)$) {\textsc{Adj}};

    \draw[line width=0.5pt]
      ([yshift=2pt]t4.north west) |- ([yshift=6pt]$(t4.north west)!0.5!(t7.north east)$) -| ([yshift=2pt]t7.north east);
    \node[above=8pt, font=\scriptsize\bfseries, text={rgb,255:red,0;green,50;blue,120}]
      at ($(t4.north)!0.5!(t7.north)$) {\textsc{Subject}};

    \draw[line width=0.5pt]
      ([yshift=2pt]t8.north west) |- ([yshift=6pt]$(t8.north west)!0.5!(t11.north east)$) -| ([yshift=2pt]t11.north east);
    \node[above=8pt, font=\scriptsize\bfseries, text={rgb,255:red,180;green,100;blue,0}]
      at ($(t8.north)!0.5!(t11.north)$) {\textsc{Verb}};

    \draw[line width=0.5pt]
      ([yshift=2pt]t12.north west) |- ([yshift=6pt]$(t12.north west)!0.5!(t19.north east)$) -| ([yshift=2pt]t19.north east);
 k  \node[above=8pt, font=\scriptsize\bfseries, text={rgb,255:red,140;green,80;blue,60}]
      at ($(t12.north)!0.5!(t19.north)$) {\textsc{Object}};

    \draw[botbrace]
      (t4.south west) -- (t5.south east)
      node[midway, below=8pt, font=\scriptsize, text={rgb,255:red,0;green,50;blue,120}] {$\mathcal{A}_S\!=\!A$};
    \draw[botbrace]
      (t6.south west) -- (t7.south east)
      node[midway, below=8pt, font=\scriptsize, text={rgb,255:red,0;green,50;blue,120}] {$\mathcal{A}_S\!=\!B$};
    \draw[botbrace]
      (t8.south west) -- (t9.south east)
      node[midway, below=8pt, font=\scriptsize, text={rgb,255:red,180;green,100;blue,0}] {$\mathcal{A}_V\!=\!1$};
    \draw[botbrace]
      (t10.south west) -- (t11.south east)
      node[midway, below=8pt, font=\scriptsize, text={rgb,255:red,180;green,100;blue,0}] {$\mathcal{A}_V\!=\!2$};
    \draw[botbrace]
      (t12.south west) -- (t13.south east)
      node[midway, below=8pt, font=\scriptsize, text={rgb,255:red,140;green,80;blue,60}] {$y\!=\!(A,1)$};
    \draw[botbrace]
      (t14.south west) -- (t15.south east)
      node[midway, below=8pt, font=\scriptsize, text={rgb,255:red,140;green,80;blue,60}] {$y\!=\!(A,2)$};
    \draw[botbrace]
      (t16.south west) -- (t17.south east)
      node[midway, below=8pt, font=\scriptsize, text={rgb,255:red,140;green,80;blue,60}] {$y\!=\!(B,1)$};
    \draw[botbrace]
      (t18.south west) -- (t19.south east)
      node[midway, below=8pt, font=\scriptsize, text={rgb,255:red,140;green,80;blue,60}] {$y\!=\!(B,2)$};

  \end{tikzpicture}
  \caption{Linear language from Experiment 2: an illustrative example 20-token vocab with $k=2$ classes, excluding the nuisance IDs. A valid training pair would be $x = [id100 \quad 1 \quad 2 \quad 2 \quad 5 \quad 10 \quad 2]$, \quad $y = 15$. The \textsc{Subject} (5) is in class $A$, and the \textsc{Verb} (10) is in class $2$, so the predicted object would have to be in class $(A, 2)$: $y = 14$ or $y = 15$. In our experiments we use $|V| = 62$ total non-id tokens, and $k=10$ categories for subjects and verbs.}
  \label{fig:diagram_linear_vocab}
  \vspace{-0.1in}
\end{figure}

\vspace{-0.1in}
\section{Experiment 2: linear grammar with Zipfian latent categories}
\vspace{-0.1in}

\paragraph{Language definition}
For our second language, we develop the setup of Experiment 1 to be slightly more language-like. We create a synthetic language that follows a simple linear template grammar, and where categories follow a Zipfian distribution. Our vocabulary $V$ is divided into 5 classes, roughly corresponding to a combination of parts of speech and grammatical roles: Determiners, Adjectives, Subjects, Verbs, and Objects, as well as a nuisance ID to make the contexts strictly one-hot.\footnote{Though we use "part of speech" as an intuitive term, the language is purely synthetic: the vocabulary tokens are all integer token ids, and their mapping to PoS classes is arbitrary; see \Cref{fig:diagram_linear_vocab}.} Training samples are drawn from the following linear template, where * indicates that lists of adjectives can be of lengths 0-3:
\begin{align*}
x = [\textsc{Id} \quad \textsc{Det}\quad \textsc{Adj}* \quad\textsc{Subject}_{\mathcal{A}_S} \quad \textsc{Verb}_{\mathcal{A}_V} \quad\textsc{Det} \quad \textsc{Adj}*] \qquad y = \textsc{Object}_{(\mathcal{A}_S \mathcal{A}_V)}\,.
\end{align*}
There are two latent semantic factors: the subclass of the subject $\mathcal{A}_{\mathcal{S}}$ and the subclass of the verb $\mathcal{A}_{\mathcal{V}}$, each with $k$ sub-classes. The prediction label, the \textsc{Object}, has $k\times k = |\mathcal{A}_S \times\mathcal{A}_V|$ sub-classes.
Note there are multiple tokens within each sub-class, so that the actual sub-class remains a latent factor in the input, see \Cref{fig:diagram_linear_vocab}. In order to make our linear language more natural, we sample $\mathcal{A}_V$ from a Zipfian distribution \citep[see][for an analysis of the pervasiveness of Zipfian distributions in natural language]{piantadosi2014zipf}. As such, we use the imbalanced generalization of ETF (the SELI geometry) as our ETF-RSM.



For our design choice of making the object class reliant on a cross-product of the subject class and the verb class, we draw inspiration from how semantic features restrict vocabulary choices in natural language. {Roughly, in a sentence like \textit{The girl ate the \_}, the next word is constrained by both the verb and the subject: it should be an object compatible with eating, and also plausible for the subject. In \textit{The cat devoured the \_}, the verb still selects for an edible object, but the subject changes which objects are most plausible, e.g., ``tuna'' rather than ``cookie''.}

For greater theoretical simplicity, we make the non-realistic assumption that \textsc{Subject}s and \textsc{Object}s are non-overlapping sets of the vocabulary, and that the elements of every \textsc{Object} subclass are also non-overlapping. 

\begin{figure}[t]
    \centering
    \makebox[\linewidth][c]{
    \includegraphics[width=1.1\linewidth]{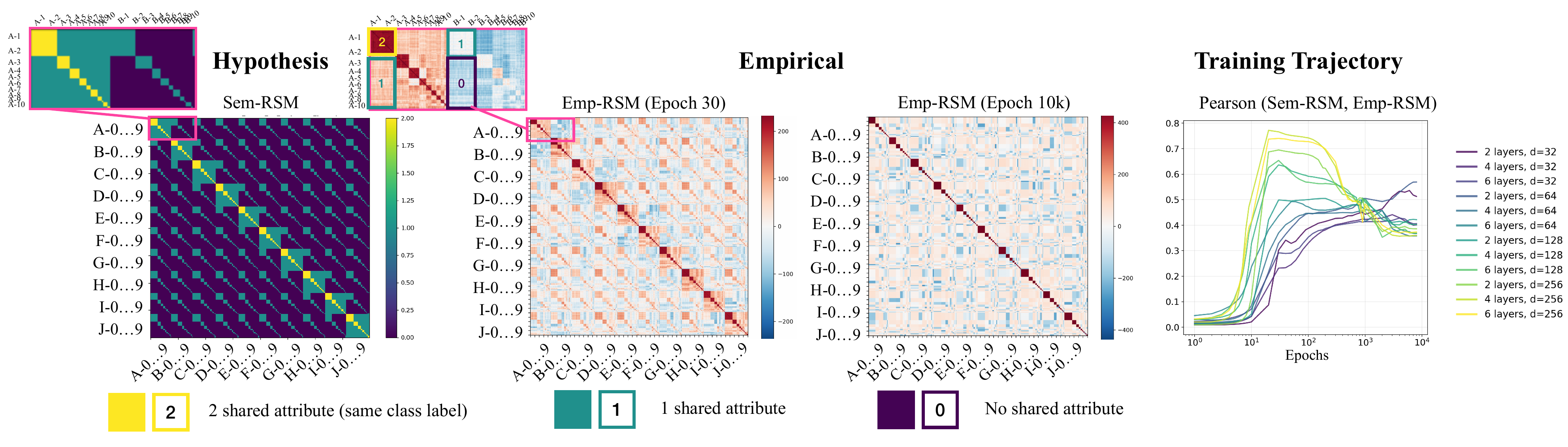}
    }
    \vspace{-0.4cm}
\caption{
\textbf{Transient semantic geometry in the linear template Zipfian language} (Experiment~2). Sample-level $n\times n$ RSMs over the full training set, ordered by semantic and output label ($n=10$k, $K=100$).
\textbf{Left:} Sem-RSM, where similarity is determined by the number of shared latent categories. Because the label distribution is imbalanced, block sizes are unequal (see inset).
\textbf{Middle:} empirical RSMs (Emp-RSM) at epochs 30 and 10k. Early in training, Emp-RSM reflects the off-diagonal semantic structure of Sem-RSM; later, this structure weakens.
\textbf{Right:} Pearson correlation between Emp-RSM and Sem-RSM over training, across model depths and widths. As in Experiment~1, larger models peak higher, and they do so early in training.
}
    \label{fig:language2_result}
    \vspace{-0.3cm}
\end{figure}

\paragraph{Results}
Our results in \Cref{fig:language2_result} show a similar overall pattern to our Experiment 1 results: the Empirical RSMs match the hypothesized semantic RSM (left) early in training, before collapsing away from the structured semantic structure, and the larger-capacity models do this more extremely. Interestingly, at the terminal phase of training, though the representation has moved away from exhibiting the semantic structure, it is not totally collapsed to a diagonal-like label structure: there are still some significant fluctuations in the off-diagonals of the empirical RSM. Though this is not strictly the ETF-RSM's SELI structure (see \Cref{app:additional-exp2}, \Cref{fig:language2_result_full} for the convergence to the ETF-RSM), it is still likely to be an effect of the more unstable learning dynamics of the Zipfian distribution, which we aim to explore in future work. 


\vspace{-0.1in}
\section{Experiment 3: hierarchical grammar with latent categories}
\label{sec:exp3}
\label{sec:lang3}
\vspace{-0.1in}
\paragraph{Language definition}
Our third synthetic language combines a hierarchical grammatical structure with a similar category-selection mechanic as our previous two languages, but where \textit{which} context token it is that defines the category of the label depends on the tree structure (more specifically, the c-command relation). With this language, we aim to capture a controlled toy version of the interactions between grammatical and semantic structure in natural language, where the ways in which meaning restrictions (like tense or negation scope) propagate to future tokens is often dictated by syntactic tree operations.

For our hierarchical grammar
every node in our tree can expand either by coordination ($XP \rightarrow XP\; XP$), right-branching ($XP \rightarrow X\; XP$), or trivial projection ($XP \rightarrow X$). 
Each vocabulary item is also tied to a semantic category, $\mathcal{A}_C$. To sample an $x, y$ pair, we sample a string from the grammar, where the semantic category $\mathcal{A}_C$ is randomly selected for all of the tokens except the last one. The string minus the last token is then our $x$. The last token, $y$, is restricted to be the semantic class of the \textbf{closest token in $x$ that c-commands it}.  C-command is a relation between two nodes in a tree, where $A$ c-commands $B$ if $A$'s direct parent dominates all of $B$'s ancestors (see the red arrow in \Cref{fig:diagram-hierarchical} for an example of c-command), and is an important tree relation in generative linguistics for describing processes like pronoun binding \citep{reinhart1976syntactic, chomsky1988lectures}, negation scope \citep{ladusaw1979polarity}, quantifier scope \citep{may1985logical}, and tense \citep{kratzer1998more}.

To make the model able to induce the tree structure from the language strings, we have two options. The one option would be to have multiple parts of speech and have the model induce the tree structure as in standard grammar induction paradigms. The second option (which we take) is to keep just a single part of speech ($X$), and mark the internal tree structure in the form of parentheses: interleaving in a Dyck language that has the same internal nodes (see the \Cref{fig:diagram-hierarchical} for an example of such a linearization). We elect to have just a single part of speech in order to not conflate the learning of semantic categories with the grammar induction of parts of speech. Having just one part of speech means that the \textit{only relevant categories} for the model to represent are our semantic categories, and so we can measure the RSA to a ground-truth categorical geometric representation.

In future work, we aim to examine the interplay between jointly inducing grammar and learning category-selection semantics. Doing this would require careful and varied analysis of the representation space, as there is no single ground truth similarity matrix. The interface and interference of syntactic and semantic categories is a fundamental topic of analysis across linguistic traditions, and quantifying "ground-truth" representations to represent different hypotheses is a nuanced and multifaceted process. This type of sweep and analysis is beyond the scope of what the current experiment is contributing to this paper's thesis (a hierarchical processing data point), and so we leave it to future work.


\begin{figure}[t]
  \vspace{-1.1cm}
  \centering
  \begin{minipage}[c]{0.35\textwidth}
  \adjustbox{width=0.9\linewidth}{
    \centering
        \begin{tikzpicture}[
      box/.style={draw, fill=white, minimum width=0.6cm, minimum height=0.6cm, inner sep=0pt, font=\footnotesize},
      brace/.style={decorate, decoration={brace, mirror, amplitude=4pt, raise=2pt}, line width=0.7pt},
    ]
      \node[anchor=east, font=\Large] at (-0, -0.75) {$V =\quad$};
 
      \node[box] (topen)  at (0*0.7, 0) {(};
      \node[box] (tclose) at (1*0.7, 0) {)};
      \foreach \i in {0,...,3} {
        \pgfmathtruncatemacro{\col}{\i + 2}
        \node[box] (t\i) at (\col*0.7, 0) {\i};
      }
 
      \foreach \i in {4,...,7} {
        \pgfmathtruncatemacro{\col}{\i - 4}
        \node[box] (t\i) at (\col*0.7 + 1.4, -1.5) {\i};
      }
 
      \begin{pgfonlayer}{background}
        \node[draw=none, fill={rgb,255:red,255;green,242;blue,228},
              rounded corners=4pt, inner sep=3pt,
              fit=(topen)(t3)] {};
        \node[draw=none, fill={rgb,255:red,255;green,242;blue,228},
              rounded corners=4pt, inner sep=3pt,
              fit=(t4)(t7)] {};
      \end{pgfonlayer}
 
      \draw[brace]
        (t0.south west) -- (t1.south east)
        node[midway, below=8pt, font=\footnotesize] {$\mathcal{A}_C\!=\!a$};
      \draw[brace]
        (t2.south west) -- (t3.south east)
        node[midway, below=8pt, font=\footnotesize] {$\mathcal{A}_C\!=\!b$};
      \draw[brace]
        (t4.south west) -- (t5.south east)
        node[midway, below=8pt, font=\footnotesize] {$\mathcal{A}_C\!=\!c$};
      \draw[brace]
        (t6.south west) -- (t7.south east)
        node[midway, below=8pt, font=\footnotesize] {$\mathcal{A}_C\!=\!d$};
    \end{tikzpicture}
    }
  \end{minipage}%
  \hspace{0cm}
  \begin{minipage}[c]{0.35\textwidth}
    \centering
    \adjustbox{max width=0.7\linewidth}{
    \begin{forest}
      for tree={s sep=6mm, inner sep=2pt, before computing xy={l=7mm}}
      [(\quad)
        [(\quad)
          [2]
          [(\quad)
              [6]]]
        [(\quad)
          [4, name=ccom]
          [(\quad)
             [(\quad)
               [(\quad) [1]
                ]
               [(\quad) [3]
                 ]
               ]
              [(\quad) [{$y=5$}, name=y]]]
        ]
      ]
      \draw[->, line width=2pt, red, opacity=0.4, bend left=40] (y) to (ccom);
    \end{forest}
    }
  \end{minipage}
      \caption{An illustrative example of our Exp 3 hierarchical language with $|V|=10, k=2$. To linearize the tree interleaving our probabilistic context free grammar with the Dyck language, we produce an opening or closing bracket whenever an internal node is first and last encountered. So, the training pair for the pictured tree is $x = \;(\; ( \; 2 \; ( \; 6 \; ) \; ) \; ( \; 4 \; ( \; ( \; ( \; (\; 1 \; ) \; ( \; 3 \; )\; ) \; ( \;$, \quad $y= 5$. In this example, $y$ has to belong to $\mathcal{A}_c = c$, as that is the class of the token that c-commands it (4). 
      Trailing closing parentheses are excluded from the prediction objective.  } 
  \label{fig:diagram-hierarchical}
  \vspace{-0.1in}
\end{figure}

\paragraph{Results}

\Cref{fig:language3_result} shows the same qualitative pattern as in the first two experiments, now in a hierarchical setting. Early in training, the empirical RSM closely matches the semantic hypothesis: examples controlled by the same latent category form clear blocks, and the correlation with Sem-RSM rises sharply to a high peak. This indicates that the model organizes representations by the latent semantic category relevant for prediction, even though that category is mediated by tree structure rather than a fixed linear position. Note that the RSA to the semantic representation peaks later, and the semantic block-diagonals are still faintly present even in the terminal phase of training. This effect is strongest for the smaller-width models ($d=8,16$), which keep higher semantic alignment than the larger-width models. One possible explanation is that the hierarchical structure is both harder to learn and acts as a weak regularizer towards the lower-rank semantic organization, especially when width is limited and the model cannot as easily move to a finer label-level geometry. 
This is an avenue for exploring in future work on linguistic learning dynamics.  

\section{Theoretical proxy: spherical Bag-of-Words model}
\vspace{-0.1in}
We introduce a minimal Spherical Bag-of-Words model that reproduces the same qualitative transient as the transformer experiments. This gives a simpler setting where we can study the mechanism more directly. In this model, representations first align with input-side latent structure, and later move toward label-driven geometry. Thus, the transient does not seem to require self-attention or depth, but can already appear from the interaction between input structure, optimization, and the geometry of the objective.

Our architecture is a \emph{Spherical continuous bag-of-words (BoW)} model that trains two weight matrices $W\in\R^{K\times d}$ and $H\in\R^{d\times V}$ by minimizing the CE loss between input token encoding matrix $X\in\{0,1\}^{V\times n}$ and one-hot label matrix $Y\in\{0,1\}^{K\times n}$: $\Lc(W,H)=\mathrm{KL}\left( Y||\text{Softmax}({W\cdot\lnorm{HX}})\right)$. Here, $K$ is the number of label classes, $V$ is the token vocabulary of the input contexts, and $d$ the hidden dimension, 
the KL operator averages over divergence between columns of the two input matrices, 
and, $\lnorm{\cdot}$ denotes column-wise normalization onto the unit Euclidean sphere. Note this is identical to a continuous BoW \citep{mikolov2013efficient} model except for the normalization, which however we find is crucial to establish terminal convergence to ETF.

We study this Spherical BoW framework on our Experiment 1 language. Firstly, in \Cref{sec:theory_sims,sec:theory_sims_2} we show empirically 
that when trained with Gradient Descent with small random initialization and $d\geq K$, Spherical BoW possesses the desired transient semantic behavior: early on in training, the analytical Emp-RSM $G_\text{emp}:=X^\top H^\top H X$ aligns with the data's Sem-RSM $G_\text{sem}:=B^\top B \otimes 1_m 1_m^\top$ (here, $B$ is the base matrix of shared semantic features; see appendix), before eventually converging to (a scaling of) the ETF-RSM $G_\text{ETF}=(I_K-\frac{1}{K}1_K1_K^\top)\otimes 1_m1_m^\top$. Moreover, the larger the hidden dimension, the more pronounced this behavior, consistent with our earlier experiments.

\begin{figure}
    \centering
    \vspace{-1.3cm}
    \makebox[\linewidth][c]{
    \includegraphics[width=1.1\linewidth]{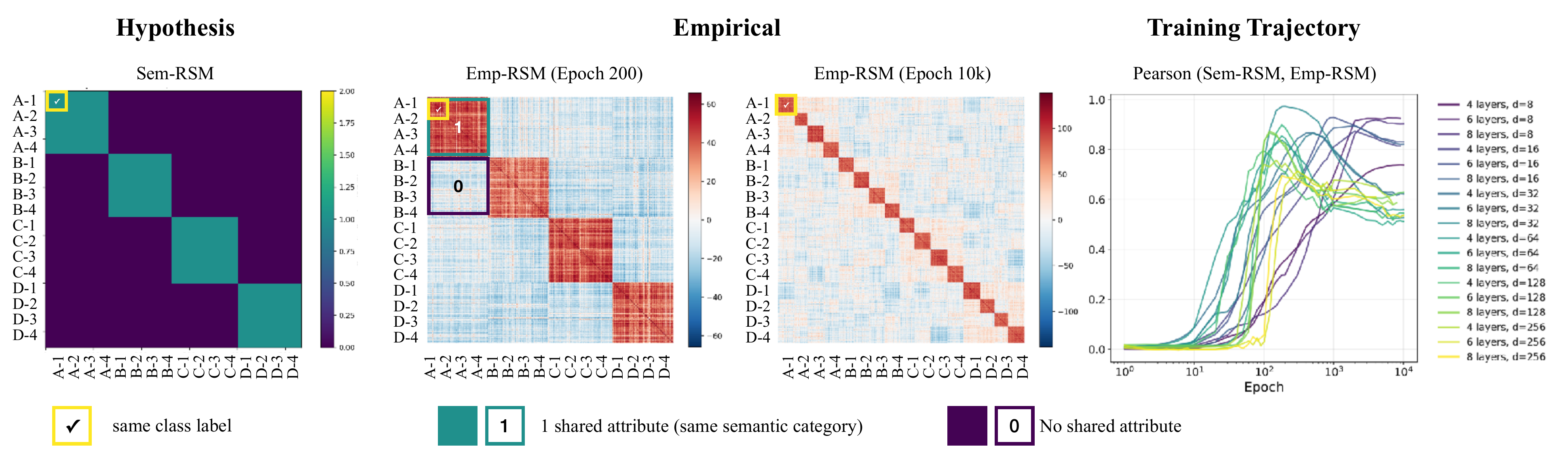}
    }
    \vspace{-0.15in}
\caption{
\textbf{Transient semantic geometry in the hierarchical grammar} (Experiment~3). Sample-level $n\times n$ RSMs over the full training set, ordered by output label ($n=2000$, $K=16$).
\textbf{Left:} Sem-RSM, where similarity is $1$ if two contexts share the same semantic category and $0$ otherwise.
\textbf{Middle:} empirical RSMs (Emp-RSM) at epochs 200 and 10k. Early in training, Emp-RSM exhibits clear category-level block structure matching Sem-RSM, indicating that contexts licensed by the same c-commanding semantic category are grouped together. By the end of training, this coarse semantic structure weakens and finer label-specific diagonal structure becomes more prominent.
\textbf{Right:} Pearson correlation between Emp-RSM and Sem-RSM over training, across model depths and widths. All models enter a strong semantic regime early, but late-stage behavior differs: narrow models ($d=8,16$) retain higher semantic alignment, whereas wider models peak earlier and decline more.
}
    \label{fig:language3_result}
    \vspace{-0.2in}
\end{figure}

Second, we provide mathematical justification for this optimization behavior. 
We prove two results (details in \Cref{sec:theorems}). \Cref{thm:transient_gram}: Assuming balanced output logits at initialization and initial features that are weakly $\eps$-correlated with the classifiers, the change in Emp-RSM after one step of gradient descent is biased towards the Sem-RSM provided $m>1$ contexts per label class and $\eps\ll 1/\sqrt{K}$. Second, \Cref{thm:gram_convergence}: assuming that gradient descent converges to a global minimizer, this minimizer possesses the ETF-RSM structure. Importantly, the spherical normalization is essential to guarantee this transition to ETF towards convergence; without it (in which case, it becomes vanilla BoW) the parameters converge to a semantic geometry. Thus, our spherical-BoW model can be interpreted as a middle-ground between vanilla BoW favoring only input-semantic geometry and an Unconstrained Feature Model (see \Cref{app:ufm}) favoring only output-label geometry.



\vspace{-0.1in}
\section{Conclusion, discussion, and limitations}
\vspace{-0.1in}

Our aim with this work is to start building an empirical and theoretical bridge between the theory around neural collapse in the classification regime, and the empirical realities of human language and language model training. 
In service of this, we have created three synthetic toy scenarios, which get progressively more language-like, and examine them in the representational collapse framework. We find that, even in cases where Simplex ETF-style representational collapse is theoretically predicted \textit{and} empirically arrived at in the terminal phase of training, \textbf{semantically-informed geometric organization is a transient phase in learning}. We show this to be the case for a toy category-matching classification, for a linear template language with Zipfian category semantics, and for a hierarchical grammar with category semantics. Our results show that the NC theory and realities of language models can in fact be empirically reconciled, showing how semantic geometry can arise early on in training even if models are large and not necessarily bottle-necked ($d\geq K$) and, thus, eventually collapse to output-label-driven NC geometries.

\textbf{How do our results transfer to the one-epoch fresh data regime?}\; Our experiments (as well as the NC literature more broadly) are based on scenarios where the model sees multiple epochs, which is not how LM training typically is. The use of only fresh data is likely to change both the theoretical and empirical realities of the setup.
In future work, we will aim to explore the questions of a) how do our results translate to cases of only fresh data (where we would need much larger vocabularies and model capacities)? and b) when does an undertrained, or transient state appear in a fresh data regime?

\textbf{Autoregressive training includes contexts of various lengths and uniqueness probabilities}\; In our original framing, we have pitched two settings against each other: the possibility that language model training data contains soft labels,
and the possibility where every context is unique (which we argue is overwhelmingly likely in long-context language modeling). However, this is not strictly true: in autoregressive language modelling 
the first few tokens (eg, the trigram $\rightarrow$ 4th word prediction) are very likely to actually be a soft-label scenario. The question remains: \textit{for a context of thousands of tokens, how important are the first $\sim$10 tokens, which might be in a soft-label regime}? 
Are these short contexts perhaps bootstrapping semantic representation, or is their effect essentially noise given their tiny fraction of the overall predicted tokens?
To understand this, we propose that future empirical work carefully control and examine the effects of the (sparse) soft-label aspect of LM training.


\textbf{Exploring more complex, realistic synthetic languages}\; In our languages, the latent structuring principle is a category-restriction semantic pattern. This way, we can confidently state a ground-truth Semantic RSM: one based on category overlap. However, the symbolic structuring features of language are much more complex, interfacing category restriction with many other syntactic and semantic latent structures. As we discuss in \Cref{sec:lang3}, integrating careful linguistic analysis with complex multifaceted synthetic languages will expand our results to better understand both the learning theory behind language models, as well as how different conceptions of linguistic structure interface with learnability. 


\textbf{Links to generalization} In image classification, the emergence of ETF geometry has been empirically linked to good generalization performance \citep{papyan2020prevalence}. \citet{wu2024linguistic} take early steps toward extending these links to the language setting, but do not account for the semantic structure of the input data, which as we show shapes representation geometry particularly in the early stages of training. Future work should investigate how the transient semantic geometry correlates with (or even facilitates) generalization.


{
\section*{Acknowledgments}
This work was partially funded by the NSERC Discovery Grant No. 2021-03677, the Alliance Grant ALLRP 581098-22, and a gift from Google. YZ was also supported by the UBC 4YF Doctoral Fellowship. The authors also acknowledge the use of the Sockeye cluster by UBC Advanced Research Computing.
}

\bibliographystyle{./colm2026_conference}
\bibliography{refs, ./new_refs, ./one-hot_refs}

\appendix

\section{Additional experimental settings and results}

\subsection{Experimental details}
\label{app:experimental-details}

\paragraph{Experiment 1.}
We sweep depth $L \in \{1,2,4,8\}$ and embedding dimension
$d \in \{4,6,8,16,32,64\}$.
All models use 8 attention heads.
Training is performed with full-batch Adam
($\mathrm{lr}=5\times10^{-4}$) for $10^4$ epochs.

\paragraph{Experiment 2.}
We sweep depth $L \in \{2, 4, 6\}$ and embedding dimension
$d \in \{32, 64, 128, 256\}$.
All models use 8 attention heads.
Training is performed with Adam, batch size $B=512$, learning rate $10^{-4}$, for $10^4$ epochs.

\paragraph{Experiment 3.}
We sweep depth $L \in \{4, 6, 8\}$ and embedding dimension
$d \in \{8, 16, 32, 64, 128, 256\}$.
All models use 8 attention heads.
Training is performed with Adam, batch size $B=512$, learning rate $5\times10^{-4}$, for $10^4$ epochs.

\subsection{Neural Collapse diagnostics}
\label{app:nc_diagnostics}

The RSA analyses compare the sample-level empirical RSM to the Sem-RSM and ETF/SELI-RSM. Since the ETF/SELI-RSM correlation is a similarity-structure diagnostic, we also report two standard Neural Collapse diagnostics, NC1 and NC2, which more directly measure within-class collapse and class-mean geometry \citep{papyan2020prevalence}.

Let $H = [h_{1,1}, \dots, h_{K,n_K}] \in \mathbb{R}^{d \times \sum n_k}$ be the feature matrix (last-layer embeddings) of examples belonging to $K$ classes, such that there is equal number of examples $n_1=\ldots=n_K$ per class. Let $W = [w_1, \dots, w_K]^\top \in \mathbb{R}^{K \times d}$ be the classifier weight matrix. Let $\mu_k = \frac{1}{n_k} \sum_{i=1}^{n_k} h_{k,i}$ be the $k$-th class mean and $\mu_G = \frac{1}{\sum n_k} \sum_{k,i} h_{k,i}$ be the global mean. 

$\mathcal{NC}1$ (Variability Collapse): The within-class covariance $\Sigma_W$ vanishes. $\text{Metric}_{\mathcal{NC}1} = \Sigma_W  \to 0$
where $\Sigma_W = \frac{1}{\sum n_k} \sum_{k=1}^K \sum_{i=1}^{n_k} (h_{k,i} - \mu_k)(h_{k,i} - \mu_k)^\top$.

$\mathcal{NC}2$ (Convergence to ETF): The centered class means $\tilde{H} = [\mu_1 - \mu_G, \dots, \mu_K - \mu_G]$ converge to a Simplex Equiangular Tight Frame (ETF).
\begin{small}
$$\text{Metric}_{\mathcal{NC}2} = \left\| \frac{\tilde{H}^\top \tilde{H}}{\|\tilde{H}^\top \tilde{H}\|_F} - \frac{1}{\sqrt{K-1}} \left( I_K - \frac{1}{K} {1}_K {1}_K^\top \right) \right\|_F \to 0$$
\end{small}
NC2 predicts that the representations of examples belonging to different classes are equally (in fact, maximally) separated. Here and throughout, we assume $d\geq K$. This assumption guarantees the existence of an ETF and it is required for NC. 

\begin{figure}[h]
    \centering
    \includegraphics[width=0.7\linewidth]{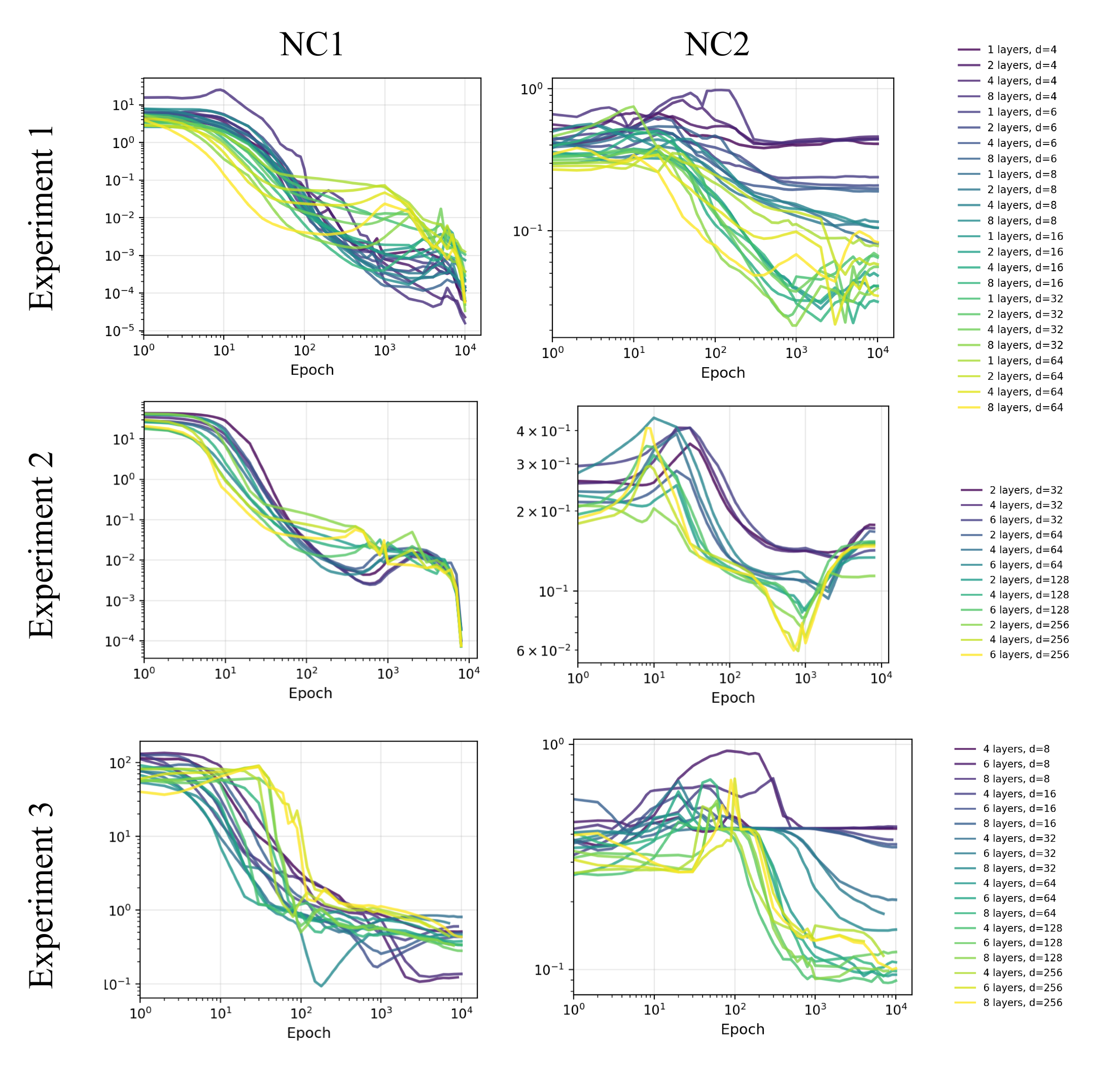}
    \vspace{-2.0em}
    \caption{
    Neural Collapse diagnostics across three experiments. NC1 measures within-class variability relative to class-mean variability, and NC2 measures the deviation of the centered class-mean Gram matrix from the corresponding ETF/SELI target geometry. Across experiments, NC1 decreases sharply during training, while NC2 generally decreases more strongly for larger models, indicating that label-driven structure becomes stronger later in training.
    }
    \label{fig:nc_diagnostics}
\end{figure}

Figure~\ref{fig:nc_diagnostics} shows that the NC diagnostics are broadly consistent with the RSA results. NC1 decreases over training, indicating increasing within-class collapse. NC2 also tends to decrease, especially in larger models, indicating movement toward the label-driven class-mean geometry. Experiment~2 is the least direct comparison because the label distribution is imbalanced; there we compare against the SELI target rather than the balanced ETF target.

\subsection{Numerical characterization of semantic structure}
\label{app:numerical}


{Motivated by the standard NC1 metric in \Cref{app:nc_diagnostics}, we also consider a direct numerical measure of semantic clustering. Instead of grouping examples by output label, this metric groups examples by latent semantic attribute, and asks whether embeddings that share the same attribute form tight clusters.}

\textbf{Semantic Convergence (\(\mathcal{NC}1_{\mathrm{sem}}\)).}
We compute the standard \(\mathcal{NC}1\) metric with one modification: instead of grouping samples by their label class \(y_k\), we group them by their semantic class (attribute) \(a \in \mathcal{A}_F\). Let \(\mu^{(a)}\) denote the centroid of the semantic class \(\mathcal{S}_a\), and let \(\mu_G^{(F)}\) denote the global mean over all embeddings associated with factor \(F\). We define the within-class and between-class scatter matrices as
\[
\Sigma_W(F)
=
\frac{1}{N}
\sum_{a\in\mathcal{A}_F}
\sum_{h\in\mathcal{S}_a}
(h-\mu^{(a)})(h-\mu^{(a)})^\top,
\]
\[
\Sigma_B(F)
=
\frac{1}{|\mathcal{A}_F|}
\sum_{a\in\mathcal{A}_F}
(\mu^{(a)}-\mu_G^{(F)})(\mu^{(a)}-\mu_G^{(F)})^\top.
\]

We quantify the collapse by the trace ratio $\mathcal{NC}1_{sem}(F) = \text{Tr}(\Sigma_{W}) / \text{Tr}(\Sigma_{B})$. 
A lower score indicates that sample embeddings with shared attributes have clustered tightly around a common centroid.

\textbf{The semantic alignment ratio ($\mathcal{NC}1_\text{sem}$).} We normalize our observation against the expected terminal value of the ETF baseline:\begin{equation}\mathcal{R}_\text{sem}(F) = \frac{\mathcal{NC}1_\text{sem}(F)}{\mathcal{NC}1_{\text{ETF}}(F)}\end{equation}
This ratio serves as our primary order parameter. $\mathcal{R}_\text{sem} < 1.0$ identifies the Semantic Regime, where geometry is more structured than the UFM baseline. $\mathcal{R}_\text{sem} \approx 1.0$ indicates the Collapsed Regime, where geometry is dictated by label class. We also validate these regimes qualitatively by inspecting the Gram matrix $G = H^\top H$ with rows/columns sorted by attribute.

We present the corresponding results in \cref{fig:appendix-numerical}. They show the same qualitative pattern as our RSA analyses: semantic structure peaks early in training, while the label-driven collapsed geometry strengthens later. Its limitation is that it becomes confounded once the embedding dimension is constrained. Small \(d\) naturally compresses all embeddings into a low-dimensional subspace, which can reduce within-cluster scatter even when the geometry is not specifically organized by the semantic attributes of interest. In other words, \(\mathcal{NC}1_{\mathrm{sem}}\) measures whether semantic groups are tight, but not whether this tightness reflects the correct semantic structure rather than generic low-dimensional compression. This makes the metric less reliable for comparing models across different embedding dimensions. For this reason, we do not use it as our main measure, and instead rely on Hypothesis RSA, which better distinguishes true semantic organization from trivial low-rank compression.

\begin{figure}
    \centering
    \includegraphics[width=\linewidth]{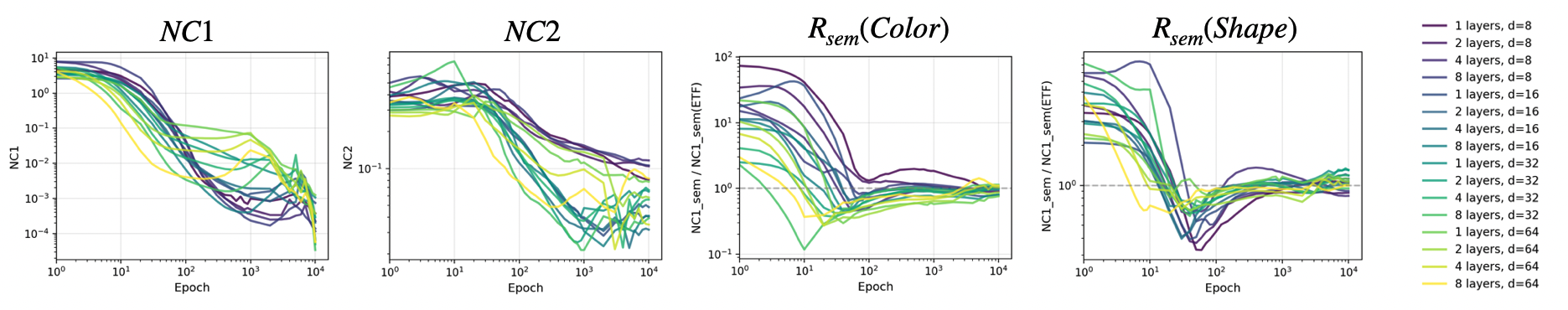}
    \caption{Recreation of \cref{fig:main_results}, right panel, using our numerical measures of neural collapse and semantic representation. We see very similar learning dynamics to the patterns we report in the main text, with ETF-style collapse increasing monotonically, while semantic representation peaks early in training.}
    \label{fig:appendix-numerical}
\end{figure}

\subsection{Different-label semantic structure}
\label{app:cross_label_diagnostics}

One possible concern with the Sem-RSM correlation is that it can partly benefit from same-label structure. In our synthetic languages, two examples with the same output label also share all latent factors, so they are maximally similar under the Sem-RSM. Therefore, a representation that mainly clusters examples by exact output label can still have nonzero correlation with the semantic hypothesis. To check that our early semantic signal is not only coming from this effect, we measure semantic structure using only pairs of examples with different output labels.

Let $G_{\mathrm{emp}}^{(t)}$ be the empirical RSM at checkpoint $t$, $G_{\mathrm{sem}}$ be the semantic RSM, and $y_i$ be the output label of example $i$. We define the set of different-label pairs as
\[
\mathcal{D} = \{(i,j): i<j, \; y_i \neq y_j\}.
\]
We also split this set into pairs that share at least one latent factor and pairs that share none:
\[
\mathcal{D}_{+} = \{(i,j)\in\mathcal{D}: G_{\mathrm{sem},ij}>0\},
\qquad
\mathcal{D}_{0} = \{(i,j)\in\mathcal{D}: G_{\mathrm{sem},ij}=0\}.
\]
First, we compute a different-label semantic gap:
\[
\Delta_{\mathrm{diff}}(t) = \frac{1}{|\mathcal{D}_{+}|} \sum_{(i,j)\in\mathcal{D}_{+}} G_{\mathrm{emp},ij}^{(t)} - \frac{1}{|\mathcal{D}_{0}|} \sum_{(i,j)\in\mathcal{D}_{0}} G_{\mathrm{emp},ij}^{(t)} .
\]
This asks whether two examples with different labels are still closer when they share some latent semantic factor than when they share none.

Second, we compute semantic RSA after removing same-label pairs. Here,
\(\operatorname{corr}(\cdot,\cdot)\) denotes Pearson correlation, matching the RSA metric
used in the main text, and the computation is restricted to the different-label pair set \(D\):
\[
\rho_{\mathrm{diff}}(t)
=
\operatorname{corr}\left(
\left\{G^{(t)}_{\mathrm{emp},ij}\right\}_{(i,j)\in D},
\left\{G_{\mathrm{sem},ij}\right\}_{(i,j)\in D}
\right).
\]
This is the same semantic-RSM comparison as in the main text, but restricted to different-label pairs.

\begin{figure}[h]
    \centering
    \includegraphics[width=0.7\linewidth]{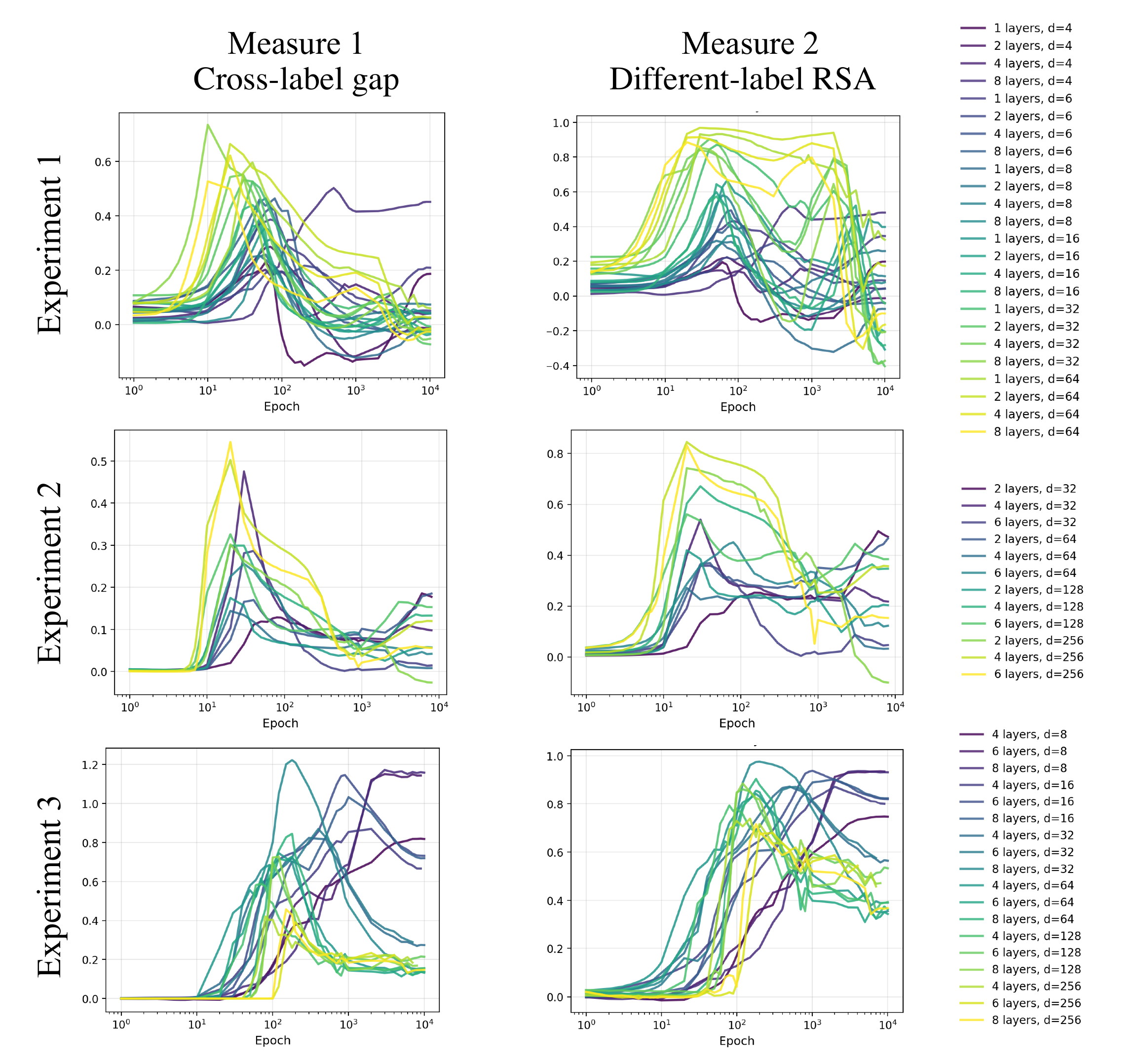}
    \vspace{-2.0em}
    \caption{
    Cross-label semantic diagnostics for all three experiments.
    The left column (measure 1) shows the cross-label semantic gap between different-label pairs that share at least one latent factor and those that share none.
    The right column (measure 2) shows semantic RSA computed only over different-label pairs. Lighter colors correspond to larger models. Across experiments, both measures rise early in training and weaken later, especially in larger models, showing that the early semantic alignment is not explained by same-label collapse alone.
    }
    \label{fig:rebuttal_cross_label_diagnostics}
\end{figure}

Figure~\ref{fig:rebuttal_cross_label_diagnostics} shows that both different-label diagnostics follow the same qualitative pattern as the main semantic RSA curves. They rise early in training, and then decrease later as the representations become more organized by exact output label. This supports the interpretation that the early semantic regime reflects graded structure across labels, not only collapse within each label.

\subsection{SELI geometry for imbalanced label classes}
\label{}
As mentioned in the main text, in the imbalanced setting, the balanced ETF geometry is no longer the ETF geometry \cite{fang2021exploring}. Instead, we use its imbalanced generalization, the SELI geometry \citep{thrampoulidis2022imbalance}. 

For ease of reference we briefly explain here how to obtain this geometry: Let $Y_c=(I_K-\frac{1}{K}1_K1_K^\top)Y$ be the centered one-hot encoding label matrix (a simplex-encoded label, or SEL, matrix), and let $Y_c=U\Sigma V^\top$ be its SVD.
The SELI-RSM reference geometry is then described by the Gram matrix  $G_\text{SELI}=V\Sigma V^\top$, arising from the logits interpolating the SEL matrix.

This geometry has been empirically verified and theoretically justified in STEP-imbalanced image classification \cite{thrampoulidis2022imbalance, behnia2023implicit}; here we adopt it as the best-known proxy in the literature for more general imbalanced settings. For consistency, and to emphasize the label-driven nature of NC geometries, we retain the name ETF-RSM throughout the main body, but in Experiment 2 this reference should be understood as the SELI generalization described above.




\subsection{Additional results for Experiment 2}
\label{app:additional-exp2}

\begin{figure}
    \centering
    \makebox[\linewidth][c]{
    \includegraphics[width=1.2\linewidth]{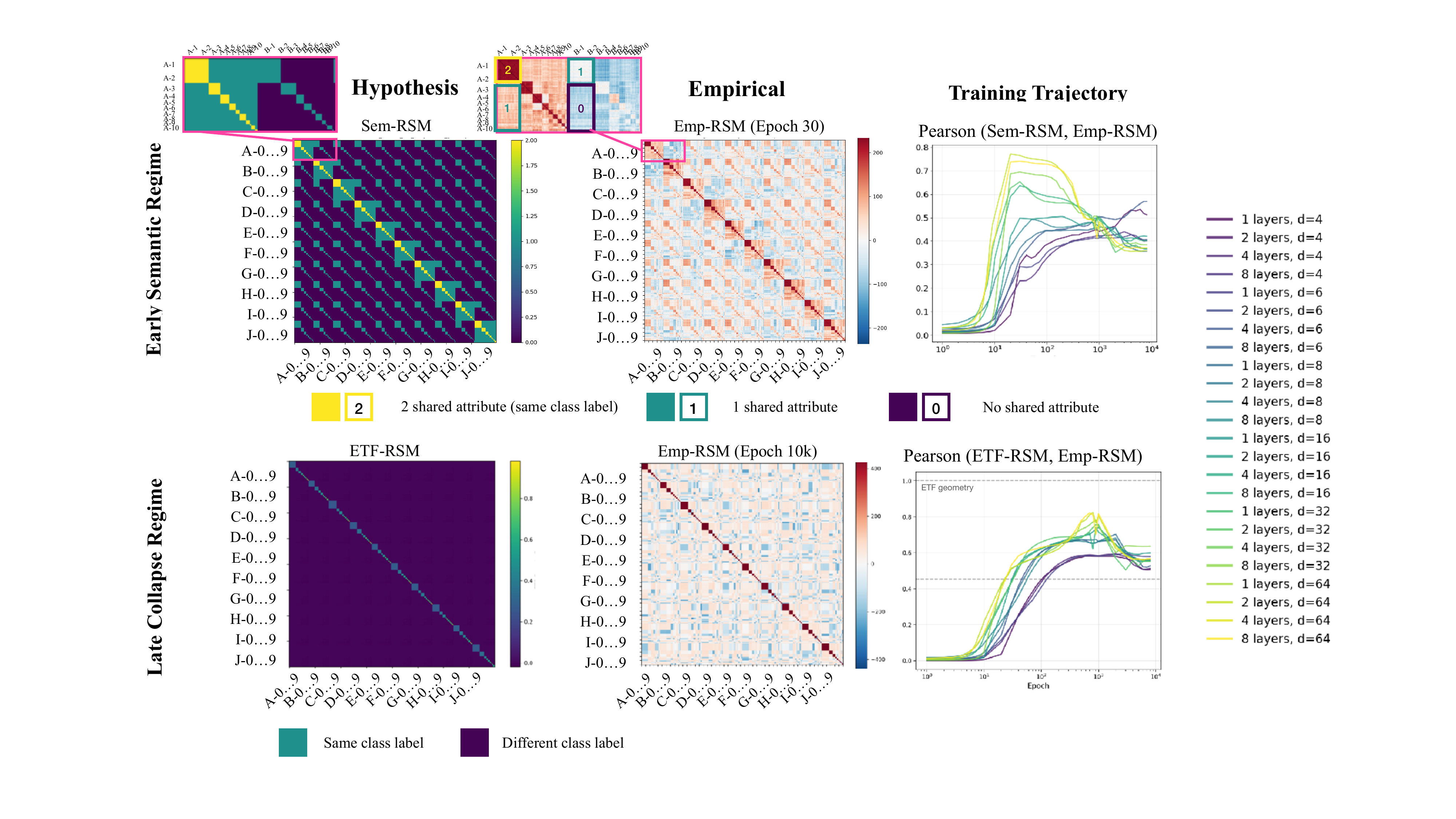}
    }
    \caption{\textbf{Full representational similarity results for Experiment 2.}
This figure is the Experiment~2 analogue of \cref{fig:main_results}. \textbf{Left column:} hypothesis RSMs. Sem-RSM encodes the semantic hypothesis, where two contexts have similarity 2 if they share both latent attributes, 1 if they share one, and 0 otherwise. ETF-RSM encodes the label-driven reference geometry. In this imbalanced setting, this reference should be understood as the SELI generalization of ETF described in the main text. \textbf{Middle column:} empirical RSMs at an early epoch 30 and a late epoch 10k. Early in training, the empirical RSM closely matches the graded block structure of Sem-RSM. Later in training, the matrix becomes more label-driven, but some off-diagonal structure remains. \textbf{Right column:} Pearson correlation between Emp-RSM and each hypothesis RSM over training, across model depths and widths. As in Experiment~1, alignment with Sem-RSM rises early and then declines, while alignment with ETF-RSM increases later. Wider models show both stronger early semantic alignment and a stronger later shift toward the label-driven geometry.}
    \label{fig:language2_result_full}
\end{figure}

\Cref{fig:language2_result_full} shows the full Experiment~2 trajectories. The clearest comparison is across model size: wider models reach stronger early alignment with Sem-RSM and also move further toward the label-driven geometry later in training, while depth has a smaller effect.

The full figure also clarifies the late-stage geometry. The empirical RSM becomes more label-driven over training, but it does not become diagonal-only. Instead, some off-diagonal structure remains, which is consistent with the Zipfian label imbalance in this experiment. Since the terminal label-driven geometry is the imbalanced SELI geometry rather than the balanced ETF pattern, this residual structure should be understood as part of the late imbalance-driven regime.
\subsection{Additional results for Experiment 3}

\cref{fig:language3_result_full} shows the full Experiment~3 results in the same format as \cref{fig:main_results}. All models first enter the semantic regime: early in training, the empirical RSM matches Sem-RSM much more closely than ETF-RSM. Later, the empirical RSM becomes more diagonal, Sem-RSM correlation drops, and ETF-RSM correlation increases. But compared with Experiment~1, this later shift is weaker, and semantic blocks remain visible even at epoch~10k.

The clearest comparison in this figure is across widths. Smaller-width models keep higher Sem-RSM correlation and lower ETF-RSM correlation through late training. Models with widths $d=64$ and $d=128$ show a more clearly transient semantic phase: they peak higher on Sem-RSM early, then decline more, while also moving further toward ETF-RSM. Depth has a smaller effect than width. Overall, in Experiment~3, width mainly controls how far training moves away from the early semantic geometry.
\label{app:additional-exp3}

\begin{figure}
    \centering
    \makebox[\linewidth][c]{
    \includegraphics[width=1.2\linewidth]{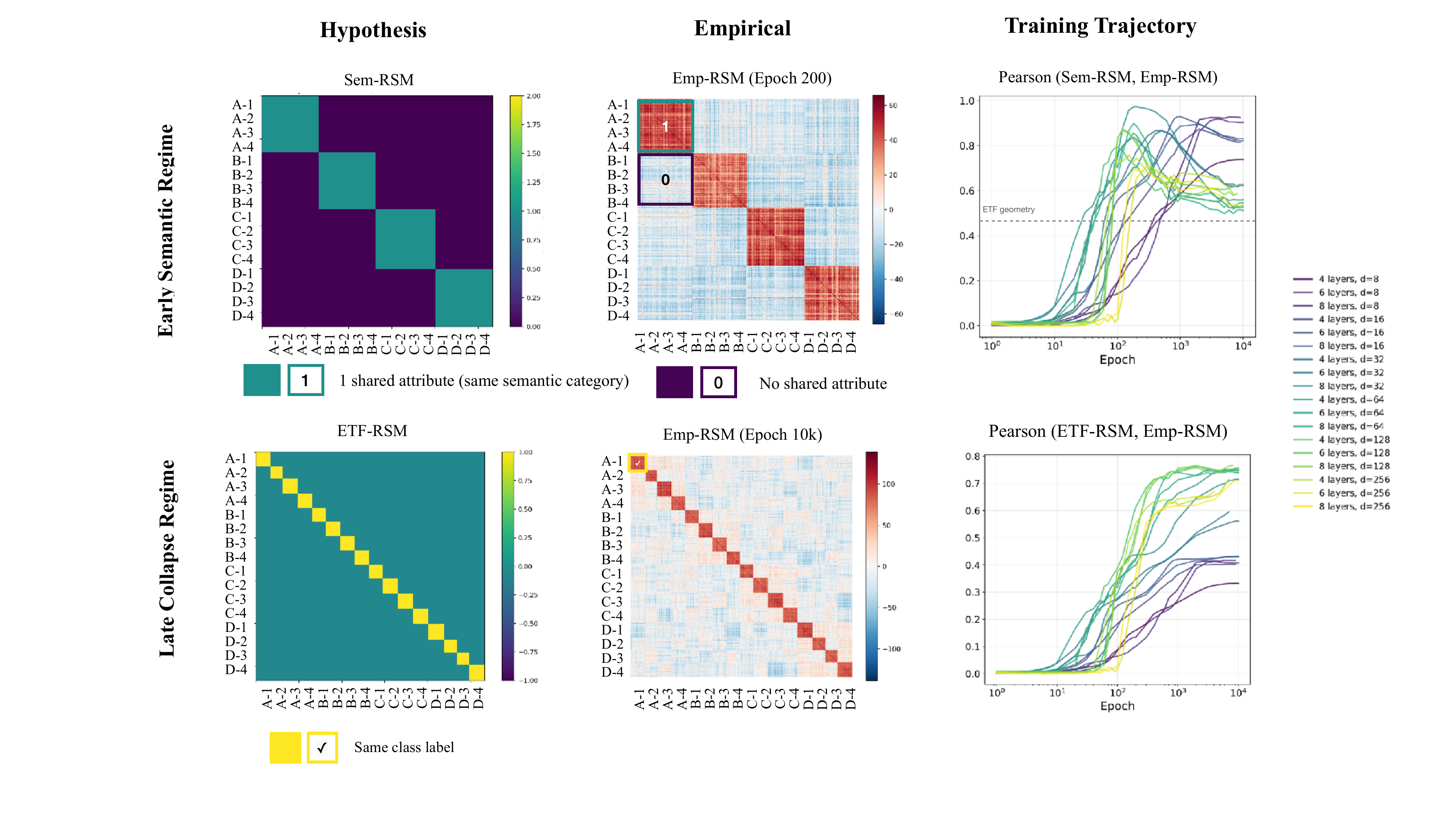}
    }
\caption{\textbf{Full representational similarity results for Experiment 3.}
This figure is the Experiment~3 analogue of \cref{fig:main_results}. \textbf{Left column:} hypothesis RSMs. Sem-RSM encodes the semantic hypothesis, where two contexts have similarity 1 if they are associated with the same semantic category and 0 otherwise. ETF-RSM encodes the label-based collapsed geometry, with $\checkmark$ marking same-label pairs. \textbf{Middle column:} empirical representational similarity matrices (Emp-RSM) at an early epoch 200 and a late epoch 10k. At epoch 200, the empirical RSM shows clear semantic block structure that closely matches Sem-RSM. At epoch 10k, this block structure is weaker and the matrix is more diagonal, although the semantic blocks are still faintly visible. \textbf{Right column:} Pearson correlation between Emp-RSM and each hypothesis RSM over training, across model depths and widths. As in the earlier experiments, correlation with Sem-RSM rises early and correlation with ETF-RSM increases later in training. But the dynamics differ from Experiment~1. Models with intermediate and larger widths, especially $d=64$ and $d=128$, reach stronger early alignment with the semantic hypothesis and also align more with ETF-RSM later in training. Smaller-width models retain relatively high semantic alignment until the end of training, indicating a less complete shift away from the semantic geometry.}
    \label{fig:language3_result_full}
\end{figure}

\subsection{Layerwise geometry}
\label{app:layerwise_geometry}

The main experiments measure representation geometry at the final layer, since this is the representation directly used by the readout. We additionally ask whether the same semantic-to-label transition appears uniformly across layers, or whether different layers preserve different parts of the geometry.

{For each checkpoint $t$ and layer $\ell$, we construct an empirical RSM from the layer-$\ell$ representations, denoted $G_{\mathrm{emp},\ell}^{(t)}$. We then repeat the two diagnostics defined in \cref{app:cross_label_diagnostics} for each layer: the different-label semantic gap $\Delta_{\mathrm{diff},\ell}(t)$ and the correlation with the label-driven RSM, $\rho_{\mathrm{diff},\ell}(t)$.}

\begin{figure}[h]
    \centering
    \includegraphics[width=1\linewidth]{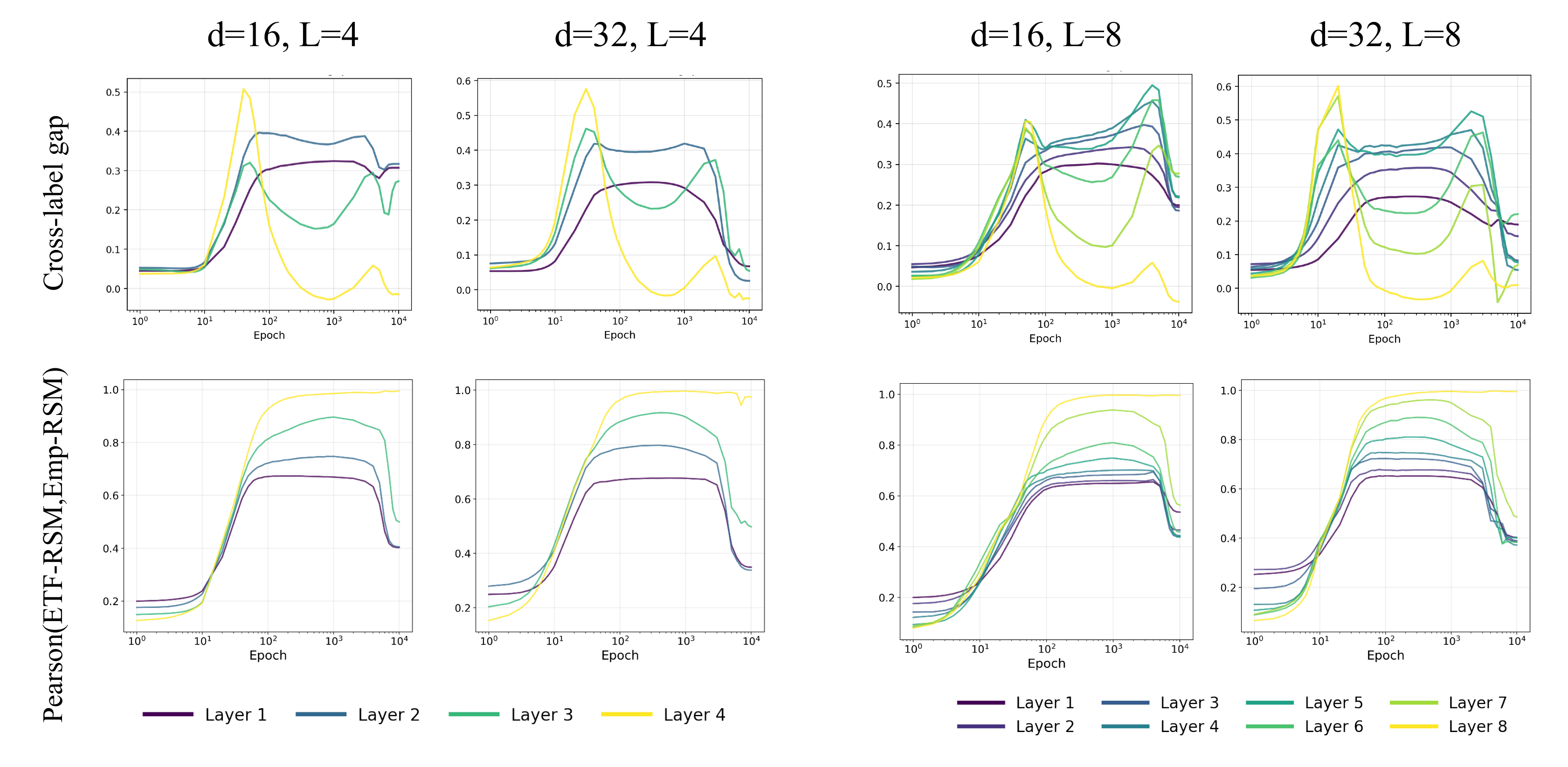}
    \vspace{-1.2em}
\caption{Layerwise RSA dynamics for Experiment 1. For four representative models, we track the cross-label semantic gap (top row), which compares the similarity of different-label pairs that share at least one latent factor to those that share none, and the Pearson correlation between each layer's empirical RSM and the ETF-RSM (bottom row) over training. Colors indicate layer depth, with darker curves corresponding to earlier layers and lighter curves to later layers. ETF-like output-label geometry is concentrated near the readout, while semantic structure can persist in earlier and intermediate layers. } \label{fig:layerwise_exp1}
\end{figure}

\begin{figure}[h]
\centering
\makebox[\linewidth][c]{%
\includegraphics[width=1.15\linewidth]{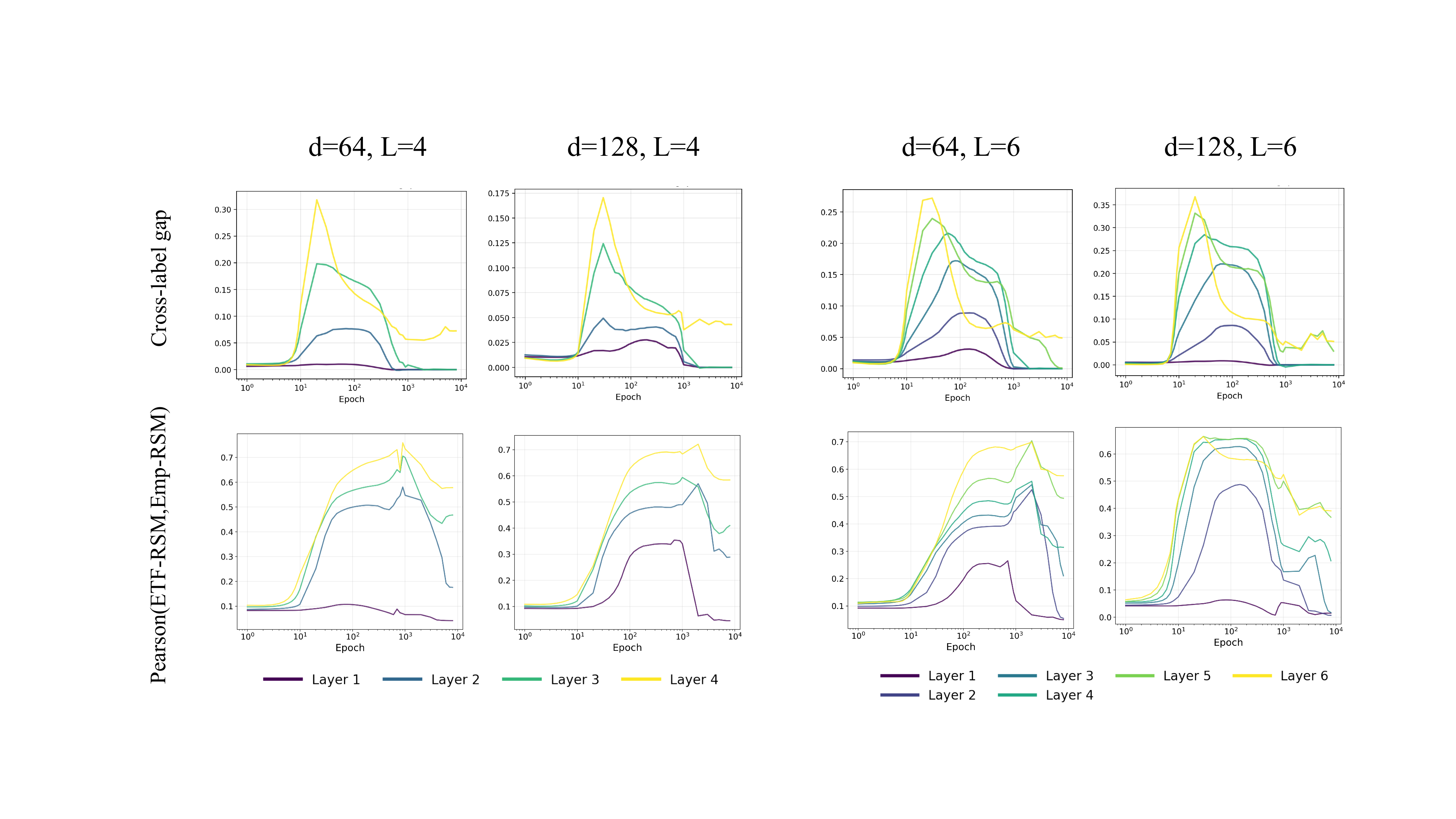}%
}
\vspace{-1.3cm}
\caption{Layerwise RSA dynamics for Experiment 2.}
\label{fig:layerwise_exp2}
\end{figure}

\begin{figure}[h]
    \centering
    \makebox[\linewidth][c]{%
    \includegraphics[width=1.15\linewidth]{./plots/exp2_midlayer_alllayers.pdf}
    }
    \vspace{-1.2cm}
\caption{Layerwise RSA dynamics for Experiment 3.} \label{fig:layerwise_exp3}
\end{figure}

\Cref{fig:layerwise_exp1}--\cref{fig:layerwise_exp3} show that the transition from semantic structure to label-driven structure is not identical at every layer. Across experiments, the ETF-RSM correlation is generally strongest in the later layers, which is expected because these layers are closest to the output prediction. The semantic gap is more mixed. In Experiment~1, it can remain visible in earlier and intermediate layers even as the final layer becomes more label-driven. In Experiments~2 and~3, upper layers often carry both a strong semantic gap and a strong label-driven signal. Thus, the layerwise picture is not simply that early layers are semantic and late layers are label-driven. Rather, the upper layers appear to first organize examples by task-relevant latent structure, and then become increasingly aligned with exact output-label geometry later in training.

This layerwise analysis does not change the main conclusion, but it makes the transition more precise: the final representation is where label-driven collapse is most visible, while intermediate representations can retain more mixed structure. Future work could study this structure in greater detail.

\section{Theoretical Analysis and Proofs} 
\label{app:theory}

\paragraph{Notation.} We introduce necessary notation that we use throughout this section. For positive integer $N$, we write $[N]:=\{1,2,\ldots,N\}$. $I_N$ denotes the identity matrix of size $N$. $1_N$ is the vector of all-ones of size $N$. $\otimes$ denotes Kronecker product. $\sft{\cdot}:\R^N\rightarrow\Delta^{N-1}$ denotes the softmax map and $\sfti{c}{v}=\exp(v_c)/{\sum_{c'\in[n]}\exp(v_{c'})}$ is its $c$-th entry. When $\sft{\cdot}$ is applied on a matrix, it acts columnwise. For vector $v$, we denote its Euclidean (L2) norm as $\|v\|_2$. We let $\lnorm{v}=v/\|v\|_2$ denote spherical normalization. When $\lnorm{\cdot}$ is applied to a matrix, it acts columnwise. For matrix $V$, we denote its operator norm as $\|V\|_\text{op}$ and its Euclidean/Frobenius (L2) norm as $\|V\|_F$. We let $e_i$ denote the standard $i$-th basis vector.

\subsection{Details on our Spherical BoW}

Let $K$ be the number of classes and $m$ be the number of samples per class. The total number of samples is $n = K \cdot m$. Let $s$ be the number of semantic features. Denote $V = n + s$ the total number of features. Let $B \in \mathbb{R}^{s \times K}$ be the base matrix of shared semantic features. For example, for $s=6$ semantic features corresponding, in order, to red, blue, circle, square, triangle, star, and $K=8$ classes corresponding in order to red-circle, red-square, and so on, the semantic base matrix $B\in\R^{6\times 8}$ reads 
\begin{align}\label{eq:B matrix}
B = \begin{bmatrix}
    1 & 1 & 1 & 1 & 0 & 0 & 0 & 0 \\
    0 & 0 & 0 & 0 & 1 & 1 & 1 & 1 \\
    1 & 0 & 0 & 0 & 1 & 0 & 0 & 0 \\
    0 & 1 & 0 & 0 & 0 & 1 & 0 & 0 \\
    0 & 0 & 1 & 0 & 0 & 0 & 1 & 0 \\
    0 & 0 & 0 & 1 & 0 & 0 & 0 & 1
\end{bmatrix}\,.
\end{align}
With this, the data matrix $X \in \mathbb{R}^{V \times n}$ is constructed by combining unique one-hot features for every sample with $m$ repeated copies of the shared class features, i.e.,
\[
X = \begin{bmatrix}
    I_n \\
    B \otimes {1}_m^\top
\end{bmatrix}\in\R^{V\times n}\,,
\]
where $I_n$ is the $n \times n$ identity matrix and $1_m$ is the all-ones vector of length $m$. Let $x_i \in \mathbb{R}^V$ denote the $i$-th column of $X$. 

Let $y_i \in[K]$ be the true class of sample $i$. Specifically, let the one-hot encoding matrix $Y = I_K \otimes 1_m^\top$. We optimize a feature weight matrix $H \in \mathbb{R}^{d \times V}$ and let a classifier weight matrix $W \in \mathbb{R}^{K \times d}$. Unless otherwise stated, we set $d=K$.  

We consider the following training objective
\begin{align}\label{eq:nBOW}
\bar{\Lc}(H, W) = \mathrm{KL}\left(Y||\text{Softmax}({W\cdot\lnorm{HX}})\right)= - \sum_{i=1}^n\log \left( \sfti{y_i}{W \frac{H x_i}{\|H x_i\|_2}} \right)\,. 
\end{align}
This is essentially a continuous Bag-of-words (BoW) model with an additional projection $\lnorm{Hx_i}=Hx_i/\|Hx_i\|_2$ of the  learned features $Hx_i$  onto the unit hypersphere before classification. The normalization can be interpreted as  modeling 
 the layer-norm at the last layer of the architecture. 
Also, consistent with the CBoW setup \citep{mikolov2013efficient}, note that optimization over the parameters $W$ and $H$  imposes no further architectural constraints. We call the model in Eq. \eqref{eq:nBOW} the Spherical BoW model.

\begin{remark}[Relation to UFM]\label{app:ufm}
    Our model is inspired by, but differs  from the unconstrained features model (UFM) \citep{mixon2020neural,fang2021exploring} which is the go-to model in the NC literature. In the UFM, the penultimate layer features of a deep-net are treated as free optimization variables rather than the output of a specific architecture. In the language setting that we study here, this means that embedding of each context is a free optimization variable \citep{zhao2024implicit}. To explicitly relate it to Eq. \eqref{eq:nBOW}, the UFM corresponds to minimizing jointly over $W$ and $H$ the loss $\Lc(H,W)=\mathrm{KL}\left(Y||\sft{WH}\right)=-\sum_{i=1}^n\log(\sfti{y_i}{WHe_i})$.  Because the UFM treats embeddings as free optimization variables, we see that it  abstracts away the input data. This abstraction precludes any analysis of how specific structure of the input data influences the resulting representations. Instead, our BoW-style model explicitly accounts for such structures by replacing the basis input vector $e_i$ above with a one-hot encoding $x_i$ over \emph{multiple features} present in the $i$-th example. 
\end{remark}

\begin{remark}[Normalization] The spherical normalization is essential in Eq. \eqref{eq:nBOW} to guarantee that in the TPT the geometry converges to an ETF. In the absense of it, we can show that the geometry retains semantic structure even at convergence. Thus, the vanilla CBoW model does not exhibit the transient semantic geometry seen in our transformer experiments. 
\end{remark}

\subsection{Analysis}\label{sec:theorems}
To aid the analysis, we henceforth fix the classifier $W$ to an ETF, i.e. $W=I_K-\frac{1}{K}1_K1_K^\top$ and train $H$. We numerically verify in Fig. \ref{fig:cufm_language1_result} that fixing $W$ maintains the transient semantic-geometry behavior. The theorems below provide theoretical justification for this. Note that despite fixing the classifier, the training objective remains non-convex due to the spherical normalization, and so the gradient-descent analysis remains challenging. We therefore resort to certain simplifying conditions under which our formal results hold. Relaxing these conditions is left to future work; we note that gradient-descent dynamics and implicit bias in linear models are substantially less understood in the presence of normalization (e.g., \cite{hoffer2018norm,cao2023implicit,chou2024robust}) than without it \cite{soudry2018implicit}.

Let $Z = HX \in \mathbb{R}^{K \times N}$ be the unnormalized feature matrix, where $z_i \in \mathbb{R}^K$ is its $i$-th column. The normalized features are denoted as $u_i = \lnorm{z_i} = \frac{z_i}{\|z_i\|_2} \in \mathbb{S}^{K-1}$. Also, let $w_c, c\in[K]$ denote the $c$-th row of $W$.

\subsubsection{Early Phase}

\begin{theorem}[Semantic bias in the first gradient step]
\label{thm:transient_gram}
Consider one gradient descent step on the Spherical Bag-of-Words objective \eqref{eq:nBOW}
with fixed ETF classifier $W = I_K - \frac{1}{K}1_K1_K^\top$. Let $Z^{(1)}=H^{(1)}X\in\mathbb{R}^{K\times n}$ be the feature matrix after this step with learning rate $\eta$.
Assume at initialization there exists a small $\varepsilon > 0$ and a scale $\alpha > 0$ such that for all $i \in [n]$:

\begin{enumerate}[leftmargin=*]
    \item the initial logits are approximately balanced across samples, so that the corresponding softmax vectors satisfy:
$
    \Big\|\sft{Wu_i^{(0)}} - \frac{1}{K}1_K \Big\|_2
    \le \varepsilon$
    \item the initial feature norms are approximately constant:
    $
    \left| \frac{1}{\|z_i^{(0)}\|_2} - \alpha \right| \le \varepsilon ;
    $
    \item the initial normalized features are weakly aligned with their true classifiers:
    $
    \left| u_i^{(0)\top} w_{y_i} \right| \le \varepsilon \,.
    $
\end{enumerate}

Then, the change $Z^{(1)}-Z^{(0)}$ in the feature matrix after one gradient step is given by
\[
Z^{(1)} - Z^{(0)} = \alpha\eta W(I_K + m B^\top B) \otimes 1_m^\top + \eta R\,,
\]
where the residual is bounded by $\|R\|_{\mathrm{op}} \le \mathcal{O}(\varepsilon \sqrt{K}) \cdot \sqrt{m} ( 1 + m \|B\|_{\mathrm{op}}^2 )$. 
Thus, if $m>1$ and $\varepsilon \ll \frac{1}{\sqrt{K}}$, the relative error vanishes and the gradient update exhibits an explicit semantic bias through the term $B^\top B$.
\end{theorem}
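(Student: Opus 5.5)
The plan is to compute the gradient of \eqref{eq:nBOW} with respect to $H$ in closed form. We then split the per-sample gradient into a leading term that does not depend on the initialization and an $\mathcal{O}(\varepsilon)$ remainder. Finally, we push both through the data Gram matrix $X^\top X$, whose block structure produces the $B^\top B$ term.

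\textbf{Step 1: the gradient.} Write $p_i=\sft{Wu_i^{(0)}}$. Since $\partial(-\log \sfti{y_i}{Wu})/\partial u = W^\top(p_i-e_{y_i})$ and $\partial u_i/\partial z_i = \|z_i\|_2^{-1}(I_K-u_iu_i^\top)$, the gradient with respect to $z_i$ is
\[
g_i=\frac{1}{\|z_i^{(0)}\|_2}\bigl(I_K-u_i^{(0)}u_i^{(0)\top}\bigr)W^\top(p_i-e_{y_i}).
\]
Because $Z=HX$, we have $\nabla_H\bar{\Lc}=GX^\top$ with $G=[g_1,\dots,g_n]$. Hence $H^{(1)}=H^{(0)}-\eta GX^\top$ and
\[
Z^{(1)}-Z^{(0)}=-\eta\, G\, X^\top X .
\]
Moreover $X^\top X = I_n + (B\otimes 1_m^\top)^\top(B\otimes 1_m^\top)=I_n + B^\top B\otimes 1_m1_m^\top$.

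\textbf{Step 2: the leading term of $-g_i$.} The classifier $W$ is symmetric with $W1_K=0$, so $W^\top(p_i-e_{y_i}) = W(p_i-\tfrac1K 1_K) - w_{y_i}$. We replace three quantities by their idealized values: $p_i$ by $\tfrac1K1_K$ (assumption 1, error at most $\|W\|_{\mathrm{op}}\varepsilon\le\varepsilon$), the projection $(I-u_iu_i^\top)w_{y_i}$ by $w_{y_i}$ (assumption 3, error $|u_i^\top w_{y_i}|\le\varepsilon$), and $1/\|z_i\|_2$ by $\alpha$ (assumption 2, error $\varepsilon\|w_{y_i}\|_2\le\varepsilon$). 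This gives $-g_i=\alpha w_{y_i}+r_i$ with $\|r_i\|_2=\mathcal{O}(\varepsilon)$, treating $\alpha=\mathcal{O}(1)$; all cross terms are products of bounded factors with at least one factor of size $\varepsilon$. In matrix form,
\[
-G=\alpha W Y + E,\qquad Y=I_K\otimes 1_m^\top,
\]
where the columns of $E$ are the $r_i$.

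\textbf{Step 3: assembling.} For the main term we compute $WY X^\top X = W(I_K\otimes 1_m^\top) + W(I_K\otimes 1_m^\top)(B^\top B\otimes 1_m1_m^\top)$. By the mixed-product rule, $(I_K\otimes 1_m^\top)(B^\top B\otimes 1_m1_m^\top)=B^\top B\otimes m1_m^\top$. Therefore
\[
\alpha\eta\, WYX^\top X=\alpha\eta\, W(I_K+mB^\top B)\otimes 1_m^\top,
\]
as claimed. The residual is $R=EX^\top X$. We bound it by
\[
\|R\|_{\mathrm{op}}\le \|E\|_F\,\|X^\top X\|_{\mathrm{op}} \le \mathcal{O}(\varepsilon\sqrt{n})\,(1+m\|B\|_{\mathrm{op}}^2),
\]
using $\|E\|_F\le \sqrt{n}\max_i\|r_i\|_2$ and $\|B^\top B\otimes 1_m1_m^\top\|_{\mathrm{op}}=m\|B\|_{\mathrm{op}}^2$. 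Since $\sqrt n=\sqrt{K}\sqrt m$, this is exactly the stated bound.

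\textbf{The relative-error remark.} We compare with the main term, whose operator norm scales like $\alpha\sqrt m\,(1+m\|B\|^2_{\mathrm{op}})$ up to the $W$-projection. The ratio is then $\mathcal{O}(\varepsilon\sqrt K)$, which is small when $\varepsilon\ll 1/\sqrt K$. The condition $m>1$ enters because the semantic block is weighted by $m$ relative to the identity.

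\textbf{Main obstacle.} The delicate step is Step 2: every deviation must be kept uniformly $\mathcal{O}(\varepsilon)$ per column, with constants independent of $K$. This relies on $\|W\|_{\mathrm{op}}=1$ and $\|w_c\|_2\le1$. Using the crude Frobenius bound in Step 3 is what costs the $\sqrt{K}$ factor. The lower bound on the main term needed for the ``relative error vanishes'' remark also requires care, because $W$ annihilates $1_K$. I would handle it by noting that $W$ preserves the centered components of $B^\top B$ and of $I_K$, which are nonzero.
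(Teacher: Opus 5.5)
Your proposal is correct and follows essentially the same route as the paper's proof: the closed-form per-sample gradient; replacing the softmax vector by $\frac1K 1_K$, the tangent projection by the identity on $w_{y_i}$, and $1/\|z_i^{(0)}\|_2$ by $\alpha$ to get $-\alpha w_{y_i}$ plus an $\mathcal{O}(\varepsilon)$ column error; then pushing this through $X^\top X = I_n + B^\top B\otimes 1_m1_m^\top$, where the $\sqrt{n}\max_i$ column bound gives the $\sqrt{K}\sqrt{m}\,(1+m\|B\|_{\mathrm{op}}^2)$ residual. Your caveat that $W$ annihilates $1_K$, and so the main term must be lower-bounded on the centered subspace for the ``relative error vanishes'' claim, is a point the paper passes over: in the color-shape example $1_K$ is the top eigenvector of $B^\top B$, so $\|W(I_K+mB^\top B)\|_{\mathrm{op}} = 1+4m$ rather than $1+6m$, which changes only the constant.
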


\begin{proof}
Let $\ell_i(z_i) = -\log(\sfti{y_i}{W\lnorm{z_i}})= -\log(\sfti{y_i}{Wu_i})$ denote the contribution to the loss of the $i$-th example. For each  $i\in[n]$,
\[
\nabla_{z_i}\ell_i(z_i)  
=
\frac{1}{\|z_i^{(0)}\|_2}
\left(I_K-u_i^{(0)}u_i^{(0)\top}\right)
W (\sft{Wu_i^{(0)}}-e_{y_i}).
\]
By Assumption (1) and using $W 1_K = 0$, this becomes:
\[
\nabla_{z_i}\ell_i(z_i)
=
\frac{1}{\|z_i^{(0)}\|_2} \left(I_K-u_i^{(0)}u_i^{(0)\top}\right) (-w_{y_i} + W r_i)\,.
\]
By Assumption (2), $\alpha_i:= \frac{1}{\|z_i^{(0)}\|_2} = \alpha + \eps_{\alpha,i}$ with $|\eps_{\alpha,i}| \le \varepsilon$. Thus, 
\[
\nabla_{z_i}\ell_i(z_i) = -\alpha w_{y_i} + \xi_i\,,
\]
where 
\[
\xi_i = -\eps_{\alpha,i} w_{y_i} + \alpha_i (u_i^{(0)\top} w_{y_i}) u_i^{(0)} + \alpha_i (I_K - u_i^{(0)}u_i^{(0)\top}) W r_i\,.
\]
Noting that $\|w_{y_i}\|_2 < 1$, $\|u_i^{(0)}\|_2 = 1$, and both $W$ and the projection matrix have operator norms of $1$, applying the triangle inequality alongside Assumptions (1)--(3) yields
\[
\|\xi_i\|_2 \le \varepsilon + (\alpha + \varepsilon)\varepsilon + (\alpha + \varepsilon)\varepsilon = \varepsilon(1 + 2\alpha + 2\varepsilon) := \varepsilon'\,.
\]
Therefore, recalling $Y = I_K \otimes 1_m^\top$, the full gradient with respect to $Z$ is:
\[
\nabla_Z \bar{\mathcal L}(Z^{(0)})
=
-\alpha WY + \Xi,
\qquad
\|\Xi\|_{\mathrm{op}} \le \sqrt{n} \max_{i\in[n]} \|\xi_i\|_2 \le \sqrt{n} \varepsilon'\,.
\]
By the chain rule, $\nabla_H \bar{\mathcal L}(H^{(0)}) = \nabla_Z \bar{\mathcal L}(Z^{(0)}) X^\top$. Thus, the updated feature matrix is:
\begin{align}
Z^{(1)}
&=
\left(H^{(0)}-\eta \nabla_H \bar{\mathcal L}(H^{(0)})\right)X
=
Z^{(0)}-\eta \nabla_Z \bar{\mathcal L}(Z^{(0)}) X^\top X
\\
\label{eq:Z1_transient_proof}
    &= Z^{(0)} + \alpha\eta WYX^\top X -\eta \Xi X^\top X\,.
\end{align}
Using 
$
X^\top X = I_N + (B^\top B) \otimes (1_m 1_m^\top)$ and some algebra steps yields
\[
Y X^\top X = (I_K + m B^\top B) \otimes 1_m^\top\,.
\]
Finally, let the residual matrix be $R = -\Xi X^\top X$. Because the non-zero eigenvalue of $1_m 1_m^\top$ is $m$, the maximum eigenvalue of $X^\top X$ is  $1 + m \|B\|_{\mathrm{op}}^2$. Thus, using $n = Km$:
\[
\|R\|_{\mathrm{op}} \le \|\Xi\|_{\mathrm{op}} \|X^\top X\|_{\mathrm{op}} \le \sqrt{K}\sqrt{m} \varepsilon' \left( 1 + m \|B\|_{\mathrm{op}}^2 \right)\,.
\]
This completes the proof.
\end{proof}

\subsubsection{Terminal Phase}


\begin{theorem} \label{thm:gram_convergence}
Consider training with gradient descent the Spherical Bag-of-Words model \eqref{eq:nBOW}   with a fixed ETF classifier $W=I_K-\frac{1}{K}1_K1_K^\top$. Assume that gradient descent converges to a global minimizer $H_\infty$, and let $Z_\infty = H_\infty X$. 
Let $U_\infty=\lnorm{Z_\infty}$ be the corresponding matrix of normalized features. Then, the normalized Gram matrix satisfies:
$$U_\infty^\top U_\infty = \frac{K}{K-1} \left( I_K - \frac{1}{K}1_K1_K^\top \right) \otimes 1_m1_m^\top = \frac{K}{K-1} G_\text{ETF}$$
\end{theorem}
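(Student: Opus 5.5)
The plan is to reduce the global minimization over $H$ to $n$ decoupled minimizations over the unit sphere. I will then solve each one in closed form using Jensen's inequality and Cauchy--Schwarz.

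\textbf{Step 1: decoupling.} The loss \eqref{eq:nBOW} depends on $H$ only through the normalized columns $u_i=\lnorm{Hx_i}$. Each summand is $\ell_i=f_{y_i}(u_i)$, where $f_y(u):=-\log\sfti{y}{Wu}$. Since $X=\begin{bmatrix} I_n\\ B\otimes 1_m^\top\end{bmatrix}$ contains $I_n$, it has full column rank $n$. Hence every $Z\in\R^{K\times n}$ is realizable as $Z=HX$, for instance with $H=Z(X^\top X)^{-1}X^\top$. Consequently every tuple of unit vectors $(u_1,\dots,u_n)$ is realizable. It follows that
\[
\min_H\bar{\Lc}(H)=\sum_{i=1}^n\min_{\|u\|_2=1}f_{y_i}(u),
\]
and $H_\infty$ is a global minimizer if and only if each column of $U_\infty$ minimizes the corresponding $f_{y_i}$ over the sphere. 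So it suffices to show that $f_y$ has the \emph{unique} spherical minimizer $u^\star_y=w_y/\|w_y\|_2$.

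\textbf{Step 2: the per-sample problem.} Let $v=Wu$. Then $\sum_c v_c=0$ because $1_K^\top W=0$. We can write
\[
f_y(u)=\log\Big(1+\sum_{c\neq y}e^{v_c-v_y}\Big).
\]
By convexity of $\exp$ (Jensen),
\[
\sum_{c\ne y}e^{v_c-v_y}\ge (K-1)\exp\Big(\tfrac{1}{K-1}\sum_{c\ne y}v_c-v_y\Big)=(K-1)\exp\Big(-\tfrac{K}{K-1}v_y\Big).
\]
Equality holds iff all $v_c$ with $c\neq y$ are equal. Next, $v_y=w_y^\top u\le\|w_y\|_2=\sqrt{(K-1)/K}$ by Cauchy--Schwarz, with equality iff $u=w_y/\|w_y\|_2$. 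Combining the two bounds gives
\[
f_y(u)\ge\log\Big(1+(K-1)e^{-\sqrt{K/(K-1)}}\Big).
\]
At $u=w_y/\|w_y\|_2$ we have $v=Ww_y/\|w_y\|_2=w_y/\|w_y\|_2$ (using $W^2=W$ and $We_y=w_y$). The off-label entries of this $v$ are all equal to $-\tfrac1K/\|w_y\|_2$, so both equality conditions hold simultaneously. Therefore $u^\star_y=\sqrt{K/(K-1)}\,w_y$ is the unique minimizer.

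\textbf{Step 3: Gram matrix.} By Steps 1--2, at any global minimizer every column satisfies $u_i=\sqrt{K/(K-1)}\,w_{y_i}$. Hence
\[
u_i^\top u_j=\tfrac{K}{K-1}\,w_{y_i}^\top w_{y_j}=\tfrac{K}{K-1}\Big(\delta_{y_iy_j}-\tfrac1K\Big).
\]
Arranging samples by class via $Y=I_K\otimes 1_m^\top$ gives $U_\infty^\top U_\infty=\tfrac{K}{K-1}(I_K-\tfrac1K1_K1_K^\top)\otimes 1_m1_m^\top$, as claimed.

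The main obstacle is the uniqueness in Step 2, since the Gram identity requires every global minimizer to be pinned down, not just one of them. This is handled by tracking the two equality cases and checking that they are compatible. A secondary point is realizability in Step 1, which relies on the full column rank of $X$ and uses no assumption about the gradient-descent path beyond convergence to a global minimizer.
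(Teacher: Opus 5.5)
Your proof is correct and follows essentially the same route as the paper. You reduce to free features $Z=HX$ (the paper realizes $Z$ with the simpler choice $H=[\,Z,\ 0_{K\times s}\,]$ instead of your $Z(X^\top X)^{-1}X^\top$), decouple into per-sample spherical problems, and apply the same Jensen-plus-Cauchy--Schwarz argument, checking that the two equality cases hold together, to get the unique minimizer $w_{y}/\|w_{y}\|_2$ before computing the Gram matrix.
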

\begin{proof}

The proof follows from two main steps.

First, by \Cref{lem:ufm}, optimizing over $H$ is equivalent to optimizing over the free feature matrix $Z = HX \in \mathbb R^{K\times N}$. Because $H_\infty$ is assumed to be a global minimizer of $\bar{\mathcal{L}}$, the corresponding unnormalized feature matrix $Z_\infty$ must globally minimize the objective over all possible matrices in $\mathbb{R}^{K \times N}$. 

Note that the objective decouples over the samples as $\bar{\mathcal{L}}(Z) = \sum_{i=1}^N \ell_i(u_i)$. Because $Z_\infty$ is a free matrix, its columns $z_i$ can be varied independently. Suppose, for the sake of contradiction, that for some sample $i$, the normalized feature $u_{i,\infty}$ is not a global minimizer of $\ell_i(u_i)$ over the sphere $\mathbb{S}^{K-1}$. Then there exists some $u_i^* \in \mathbb{S}^{K-1}$ such that $\ell_i(u_i^*) < \ell_i(u_{i,\infty})$. By constructing a new matrix $\tilde{Z}$ identical to $Z_\infty$ but with the $i$-th column replaced by $u_i^*$, we would achieve a strictly lower total loss $\bar{\mathcal{L}}(\tilde{Z}) < \bar{\mathcal{L}}(Z_\infty)$, which contradicts the global optimality of $Z_\infty$.

Consequently, for every sample $i$, $u_{i,\infty}$ must strictly be a global minimizer of the per-sample spherical cross-entropy objective.
By \Cref{lem:optimal_alignment}, this minimizer is unique and satisfies:
$$u_{i,\infty} = \frac{w_{y_i}}{\|w_{y_i}\|_2} \qquad\text{for all } i\in[n].$$
Collecting the normalized columns by class, and using $\|w_c\|_2=\sqrt{\frac{K-1}{K}}$, gives the normalized feature matrix:$$U_\infty = \sqrt{\frac{K}{K-1}} W (I_K\otimes 1_m^\top).$$
Further using $W^\top W = W = I_K-\frac{1}{K}1_K1_K^\top$, we find the advertised expression for the normalized Gram matrix:
\begin{align*}U_\infty^\top U_\infty &= \frac{K}{K-1} (I_K\otimes 1_m)\left(I_K-\frac{1}{K}1_K1_K^\top\right) (I_K\otimes 1_m^\top) = \frac{K}{K-1} \left( I_K-\frac{1}{K}1_K1_K^\top \right)\otimes 1_m1_m^\top\,.
\end{align*}
\end{proof}

\begin{remark}[Global Convergence Assumption]
The assumption that gradient flow converges to a global minimizer is a common simplifying assumption in the UFM and Neural Collapse literature \cite{ fang2021exploring,zhu2021geometric,thrampoulidis2022imbalance}. Since the objective \(\bar{\mathcal{L}}\) is non-convex, establishing global convergence from random initialization is in general difficult. This issue remains in our setting: even with \(W\) fixed, the optimization over \(H\) is still non-convex because of the normalization. Accordingly, we adopt this assumption to separate the analysis of the terminal representation from the analysis of the optimization dynamics, and focus here on the geometry of the limiting features.
\end{remark}

\subsubsection*{Supporting Lemmas}

\begin{lemma}[Equivalence to the Unconstrained Feature Model] \label{lem:ufm}
Optimizing the objective $\bar{\mathcal{L}}$ over $H \in \mathbb{R}^{K \times V}$ is equivalent to optimizing over the free variables $Z \in \mathbb{R}^{K \times N}$.
\end{lemma}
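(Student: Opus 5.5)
The plan is to show that the linear map $H\mapsto HX$ from $\R^{K\times V}$ to $\R^{K\times n}$ is surjective. The loss $\bar{\Lc}(H,W)$ depends on $H$ only through $Z=HX$, since $\bar{\Lc}(H,W)=\tilde{\Lc}(HX)$ with $\tilde{\Lc}(Z):=-\sum_i\log\sfti{y_i}{W\lnorm{z_i}}$. Once the map is surjective, the sets of achievable loss values coincide: $\{\bar{\Lc}(H):H\in\R^{K\times V}\}=\{\tilde{\Lc}(Z):Z\in\R^{K\times n}\}$. It follows that the infima agree. Moreover, $H$ is a global minimizer of $\bar{\Lc}$ if and only if $Z=HX$ is a global minimizer of $\tilde{\Lc}$. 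This is exactly the equivalence used in the proof of \Cref{thm:gram_convergence}.

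Surjectivity follows from $X$ having full column rank $n$. The top block of $X=\begin{bmatrix} I_n\\ B\otimes 1_m^\top\end{bmatrix}$ is $I_n$, so $Xv=0$ forces $v=0$. Equivalently, $X^\top X=I_n+(B^\top B)\otimes(1_m1_m^\top)\succeq I_n$ is invertible. Given any target $Z\in\R^{K\times n}$, I will exhibit an explicit preimage. One choice is $H=Z(X^\top X)^{-1}X^\top$, which gives $HX=Z$. A simpler choice is $H=[\,Z\;\;0_{K\times s}\,]$, which satisfies $HX=Z\cdot I_n+0=Z$ directly. I will use the simpler choice.

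Next I will state the two directions explicitly. If $H^\star$ minimizes $\bar{\Lc}$ and some $Z'$ had $\tilde{\Lc}(Z')<\tilde{\Lc}(H^\star X)$, then the preimage $H'=[\,Z'\;\;0\,]$ would satisfy $\bar{\Lc}(H')<\bar{\Lc}(H^\star)$, a contradiction. Conversely, if $Z^\star$ minimizes $\tilde{\Lc}$, then every $H$ with $HX=Z^\star$, in particular $[\,Z^\star\;\;0\,]$, minimizes $\bar{\Lc}$.

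There is one caveat. The normalization $\lnorm{\cdot}$ is undefined at $z_i=0$, so both domains should be restricted to matrices with nonzero columns. Surjectivity is preserved under this restriction because the preimage construction maps such $Z$ to admissible $H$. The main obstacle is only bookkeeping: the dimensions need to be stated consistently, with $d=K$ and $N=n$ as in the paper. I expect no substantive difficulty beyond noting the full-column-rank property of $X$.
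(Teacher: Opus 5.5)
Your proof is correct and matches the paper's: the paper likewise reduces the lemma to surjectivity of $H\mapsto HX$ and exhibits the same preimage $H=[\,Z\;\;0_{K\times s}\,]$, using that the top block of $X$ is $I_n$. Your two-direction argument about global minimizers and your remark about restricting to matrices with nonzero columns, where $\lnorm{\cdot}$ is defined, make explicit what the paper leaves implicit.
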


\begin{proof} It suffices to show that for arbitrary matrix $Z \in \mathbb{R}^{K \times N}$  there exists a weight matrix $H$ such that $HX = Z$. This is possible thanks to the structure of the data matrix $X = \begin{bmatrix} I_N \\ B \otimes 1_m^\top \end{bmatrix}$. Indeed, the desired is true by setting $H=\begin{bmatrix} Z & 0_{K \times s} \end{bmatrix}$. 
\end{proof}



\begin{lemma}[Optimal Directional Alignment] \label{lem:optimal_alignment}
Suppose $W=I_K-\frac{1}{K}1_K1_K^\top$ is fixed to an ETF. For any sample $i$ with  class label $y_i \in [K]$, the minimization of the cross-entropy loss over the sphere
$$ \min_{u_i \in \mathbb{S}^{K-1}} \left\{\ell_i(u_i) = -\log\left( \sfti{y_i}{Wu_i} \right) \right\}\,, $$
has a unique global minimizer given by $u_i^* = \frac{w_{y_i}}{\|w_{y_i}\|_2}$, where $w_{c}, c\in[K]$ denotes the c-th row of $W$.
\end{lemma}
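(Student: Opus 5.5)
The plan is to reduce the spherical problem to a statement about the logit vector $v = Wu_i$ and then use the ETF structure of $W$. Since $W = I_K - \frac1K 1_K1_K^\top$ is the orthogonal projection onto $1_K^\perp$, we have $Wu_i \in 1_K^\perp$. Moreover $\|Wu_i\|_2 \le \|u_i\|_2 = 1$, with equality iff $u_i \perp 1_K$. Write $c = y_i$. The loss is
\[
\ell_i(u_i) = -\log\frac{\exp((Wu_i)_c)}{\sum_{c'}\exp((Wu_i)_{c'})} = \log\Big(1+\sum_{c'\neq c}\exp\big((Wu_i)_{c'}-(Wu_i)_c\big)\Big).
\]
So $\ell_i$ depends on $u_i$ only through the margins $(w_{c'} - w_c)^\top u_i = u_i^\top(e_{c'}-e_c)$, using $W(e_{c'} - e_c) = e_{c'}-e_c$. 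It is a strictly decreasing function of each margin gap $u_i^\top(e_c - e_{c'})$.

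The key step is a convexity/symmetry argument.

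\textbf{Existence.} The map $u \mapsto \ell_i(u)$ is continuous on the compact sphere, so a minimizer exists.

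\textbf{Reduction to $1_K^\perp$.} Decompose $u = a\,1_K/\sqrt K + \sqrt{1-a^2}\,\bar u$ with $\bar u \in 1_K^\perp$ a unit vector. The margins $u^\top(e_c-e_{c'})$ only see the component $\sqrt{1-a^2}\,\bar u$, so they scale by $\sqrt{1-a^2}$. We may therefore restrict to $a=0$, i.e.\ $u\in 1_K^\perp$ with $\|u\|=1$. To justify this, note that if $\bar u$ makes every gap $\bar u^\top(e_c - e_{c'})$ nonnegative, scaling them up strictly decreases the loss. If some gaps are negative, I compare against the candidate $w_c/\|w_c\|$, which achieves loss $\log(1+(K-1)e^{-\sqrt{K/(K-1)}})$.

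\textbf{Symmetrization on $1_K^\perp$.} The function $g(u) = \log\big(1+\sum_{c'\ne c} e^{-u^\top(e_c-e_{c'})}\big)$ is convex, as a log-sum-exp of linear functions, and invariant under permutations of the coordinates $c' \neq c$. Averaging a minimizer $u$ over these permutations gives $\tilde u$ with $g(\tilde u)\le g(u)$ by Jensen. This $\tilde u$ lies in $1_K^\perp$ and is symmetric in the $c'\neq c$ coordinates, hence $\tilde u = t\, w_c/\|w_c\|$ with $t = \|\tilde u\| \le 1$.

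\textbf{Conclusion.} Along this ray, $g$ is strictly decreasing in $t$, since every gap equals $t\sqrt{K/(K-1)}>0$. So the minimum value is attained at $t=1$, i.e.\ at $u^* = w_c/\|w_c\|_2$.

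\textbf{Uniqueness.} Suppose another minimizer $u$ exists. Then the averaging chain must be tight: $\|\tilde u\| = 1$ forces all permuted copies of $u$ to coincide, by strict convexity of the norm. Hence $u$ itself is symmetric and lies in $1_K^\perp$, so $u = \pm w_c/\|w_c\|$, and the minus sign gives negative gaps and a larger loss. Equivalently, $g$ is strictly convex along non-constant directions in $1_K^\perp$, because the vectors $e_c - e_{c'}$ span $1_K^\perp$.

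The main obstacle is the bookkeeping showing that a component along $1_K$ or a non-symmetric configuration cannot help. The tightness argument via the norm (averaged vector has norm $<1$ unless all copies are equal) handles both cleanly. Note that this also covers the $1_K$ component directly: averaging does not kill it, but then $\|P_{1^\perp}\tilde u\| < 1$, and the strict monotonicity in $t$ finishes the argument.
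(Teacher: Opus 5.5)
Your proof is correct in substance, but it reaches the key inequality by a different mechanism than the paper. The paper sets $c=w_{y_i}^\top u_i$ and uses $\sum_{j\neq y_i}w_j^\top u_i=-c$. Applying Jensen to the $K-1$ non-target exponentials gives $\ell_i(u_i)\ge \log(1+(K-1)\exp(-\tfrac{K}{K-1}c))$. It then concludes with Cauchy--Schwarz, $c\le\|w_{y_i}\|_2$, whose equality case gives uniqueness at once. You instead average $u$ over the permutations fixing $y_i$, and use convexity and permutation invariance of the whole log-sum-exp $g$.

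The two bounds are in fact identical. Since those permutations fix $w_{y_i}$, one has $W\tilde u = t'\,w_{y_i}/\|w_{y_i}\|_2$ with $t'=w_{y_i}^\top u/\|w_{y_i}\|_2$. Hence $g(\tilde u)=h(t')$ with $h(t)=\log(1+(K-1)e^{-t\sqrt{K/(K-1)}})$, which is exactly the paper's bound at $c=\|w_{y_i}\|_2\,t'$. Your symmetrization is more structural: it applies verbatim to any convex loss invariant under the stabilizer of the label. The paper's direct Jensen is shorter and avoids the bookkeeping discussed next.

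That bookkeeping is where your write-up is loose. The ``Reduction to $1_K^\perp$'' paragraph is not a valid argument. With mixed-sign gaps, scaling the margins up from $\sqrt{1-a^2}$ to $1$ need not lower the loss, and the comparison with the candidate is never completed. Likewise, the claims ``$\tilde u\in 1_K^\perp$'' and $t=\|\tilde u\|$ are wrong in general: $\tilde u$ keeps the $1_K$-component of $u$, and the signed coefficient along $w_{y_i}$ can be negative.

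The fix is to drop the reduction and run the argument as in your final remark, with the signed $t'\in[-1,1]$. Then $g(u)\ge g(\tilde u)=h(t')\ge h(1)$, because $h$ is strictly decreasing on all of $\mathbb{R}$. Uniqueness then needs neither strict convexity of the norm nor of $g$. Indeed, $g(u)=h(1)$ forces $t'=1$, i.e. $w_{y_i}^\top u=\|w_{y_i}\|_2$, and the Cauchy--Schwarz equality case gives $u=w_{y_i}/\|w_{y_i}\|_2$. This is precisely the paper's uniqueness step.

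Your closing remark that $g$ is strictly convex along directions in $1_K^\perp$ is not equivalent to this. On its own it would not yield uniqueness, since the sphere is not convex.
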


\begin{proof}
Denote $c = w_{y_i}^\top u_i$. Because $u_i \in \mathbb{S}^{K-1}$, by Cauchy-Schwarz inequality dictates that:
$$ c \le \|w_{y_i}\|_2 \,, $$
with equality holding if and only if $u_i$ is perfectly aligned with $w_{y_i}$, i.e., $u_i = \frac{w_{y_i}}{\|w_{y_i}\|_2}$.

Moreover, since $\sum_{c=1}^K w_c = {0}$:
$$ \sum_{c \neq y_i} w_c^\top u_i = - w_{y_i}^\top u_i = -c \,. $$
From this, and Jensen's inequality applied to the strictly convex exponential function:
$$ \frac{1}{K-1} \sum_{j \neq y_i} \exp(w_j^\top u_i) \ge \exp\left( \frac{1}{K-1} \sum_{j \neq y_i} w_j^\top u_i \right) = \exp\left( \frac{-c}{K-1} \right) \,, $$
with equality if and only if $w_c^\top u_i$ is identical for all $c \neq y_i$. 

Substituting this lower bound back into the loss function yields a lower bound on $\ell(u_i)$ as a function of $c$:
$$ \ell(u_i) \ge \log \left( 1 + (K-1) \exp\left( \frac{-c}{K-1} - c \right) \right) = \log \left( 1 + (K-1) \exp\left( \frac{-K}{K-1} c \right) \right) \,. $$
The lower bound is strictly  decreasing function of $c$. Therefore, to globally minimize the loss, we must strictly maximize $c$. 

But  $c$ attains its  maximum uniquely at $u_i^* = \frac{w_{y_i}}{\|w_{y_i}\|_2}$. At this point it  holds that
$$ w_c^\top u_i^* = \frac{w_c^\top w_{y_i}}{\|w_{y_i}\|_2} = \frac{-1/K}{\sqrt{1-1/K}} = \frac{-1}{\sqrt{K(K-1)}} \,. $$
Thus, the equality condition for Jensen's inequality is satisfied. We conclude that the lower bound is achieved and $u_i^* = \frac{w_{y_i}}{\|w_{y_i}\|_2}$ is the unique global minimizer.
\end{proof}

\subsection{Direct simulation of the Spherical BoW dynamics}\label{sec:theory_sims}
To complement the theoretical analysis, we run simulation with Spherical BoW model and Color-Shape data with semantic basis matrix $B$ as in Eq. \eqref{eq:B matrix}. We use \(K=8\) label classes, \(m=5\) samples per class, so \(n=Km=40\) total samples, and \(s=6\) shared semantic features corresponding to the two colors and four shapes. This input matrix is: 
\[
X=\begin{bmatrix} I_n \\ B \otimes {1}_m^\top \end{bmatrix}\in\mathbb{R}^{V\times n},
\qquad V=n+s=46,
\]
where \(B\in\mathbb{R}^{6\times 8}\) is the semantic base matrix. We sweep hidden dimension \(d\in\{8,16,32,64\}\), initialize all trainable parameters with small zero-mean Gaussians of standard deviation \(0.05\), and run full-batch gradient descent for \(10^5\) steps with learning rate \(10^{-3}\).
We consider two settings. In the first, the classifier is fixed to an ETF readout and only \(H\) is trained (aligned with theoretical analysis of Sec. \ref{sec:theorems}). In the second, both \(W\) and \(H\) are trained jointly. At each step we compute the empirical RSM
\[
G_{\mathrm{emp}} = X^\top H^\top H X
\]
and compute its Pearson correlation with semantic and ETF hypothesis RSMs. 
The results in \Cref{fig:theory_numerical} show the same two-stage behavior as in our Transformer experiments. In both settings, \(G_{\mathrm{emp}}\) first moves toward the semantic geometry: the early empirical RSM exhibits the same block structure as Sem-RSM, and correlation with Sem-RSM rises sharply. Later in training, this semantic alignment declines while correlation with ETF-RSM increases, and the final empirical RSM approaches the label-driven ETF structure.

Two points are notable. First, the transient semantic phase already appears when \(W\) is fixed to ETF, which is the setting analyzed in our theory. This shows that a trainable classifier is not necessary for the emergence of the semantic phase. Second, increasing \(d\) makes both phases more pronounced: larger hidden dimensions reach higher early alignment with Sem-RSM and later converge more fully toward ETF-RSM. Allowing \(W\) to be trainable mainly shifts the transition to later times, but does not change the qualitative picture.

\begin{figure}
    \centering
    \makebox[\linewidth][c]{
    \includegraphics[width=1.2\linewidth]{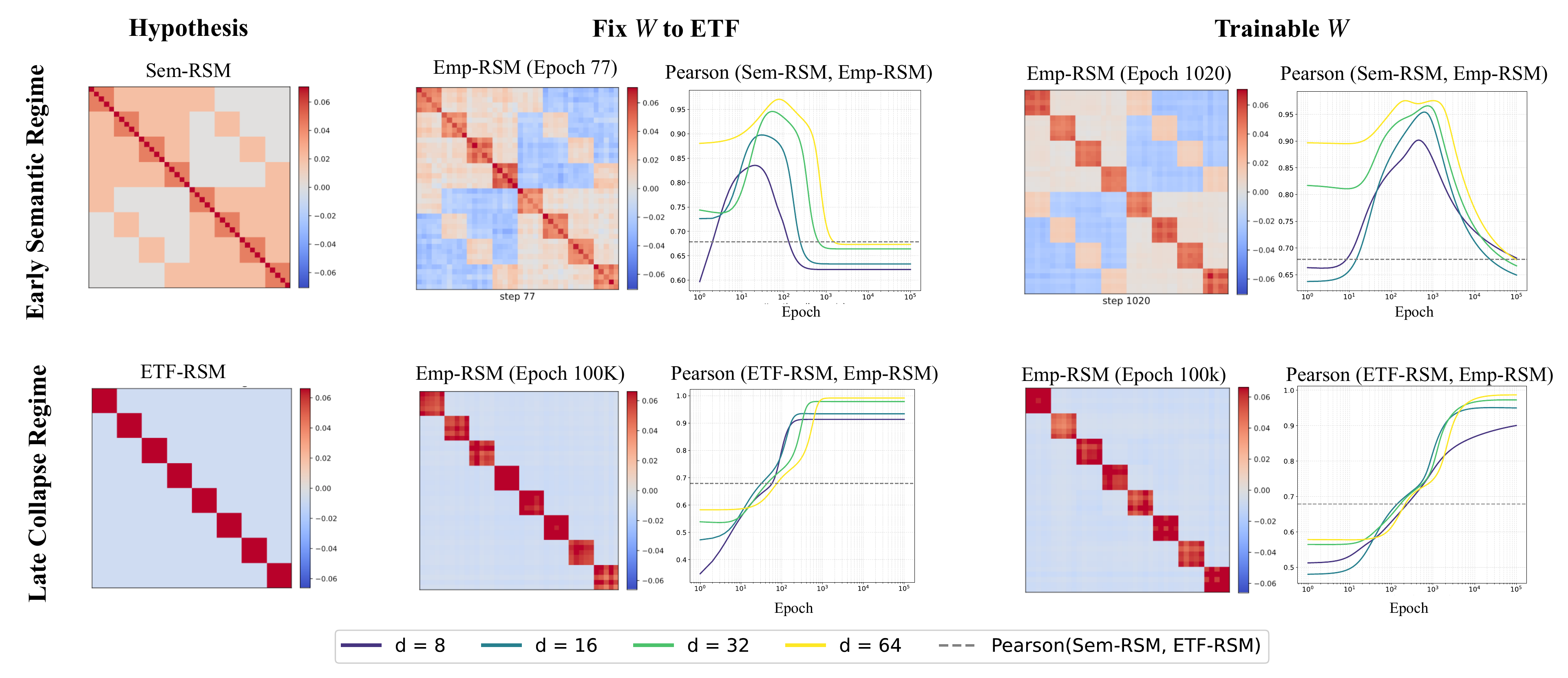}
    }
    \caption{\textbf{Numerical simulation of the Spherical BoW dynamics on the color-shape setup.}
\textbf{Left column:} hypothesis RSMs. Sem-RSM encodes the semantic similarity structure, while ETF-RSM encodes the label-driven collapsed geometry.
\textbf{Middle and Right columns:} Simulations of the Spherical BoW objective under two settings: fixing \(W\) to an ETF readout, and training \(W\) jointly with \(H\). In each setting, we show a representative empirical RSM in the early semantic regime and at the end of training, together with the Pearson correlation of \(G_{\mathrm{emp}}=X^\top H^\top H X\) with Sem-RSM and ETF-RSM over training, for \(d\in\{8,16,32,64\}\). In each setting, the early empirical RSM is taken from the epoch at which the correlation with Sem-RSM is maximal, and the late empirical RSM is taken from the final training epoch.
In both settings, the empirical geometry first aligns with the semantic structure and later approaches the ETF geometry. Larger \(d\) makes both the early semantic peak and the late ETF convergence more pronounced. The same transient semantic phase appears even when \(W\) is fixed to ETF, showing that joint training of the classifier is not necessary for the effect.}
    \label{fig:theory_numerical}
\end{figure}

\subsection{Spherical BoW on the Color-Shape Categories language}\label{sec:theory_sims_2}
\begin{figure}
    \centering
    \makebox[\linewidth][c]{
    \includegraphics[width=1.2\linewidth]{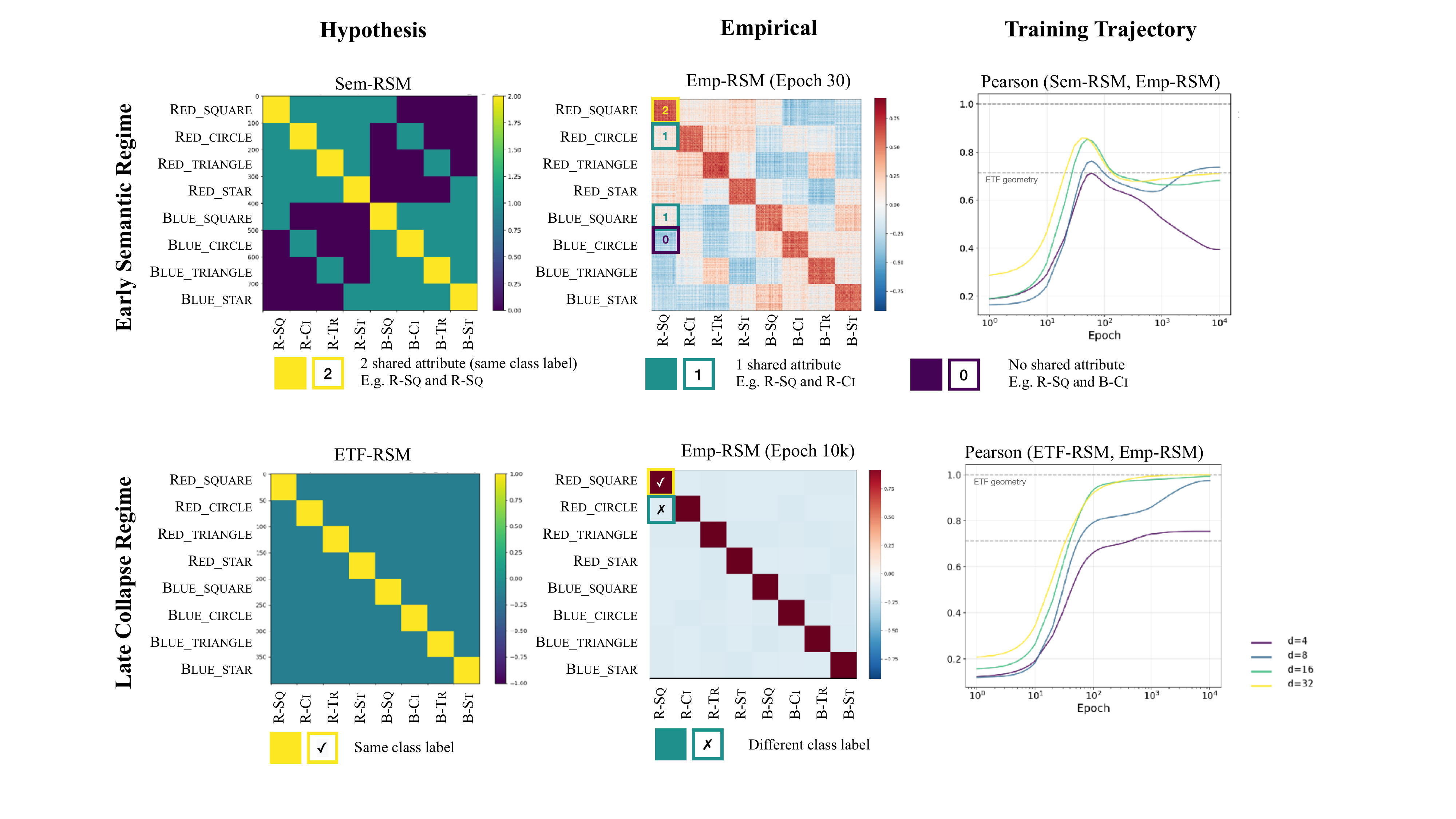}
    }
    \caption{\textbf{Spherical BoW on the full Color-Shape Categories language.}
This figure is the Spherical BoW analogue of \cref{fig:main_results}. \textbf{Left column:} hypothesis RSMs from Experiment~1. Sem-RSM encodes semantic overlap between classes, with values 2, 1, and 0 for pairs sharing two, one, or no latent attributes, while ETF-RSM encodes the label-driven collapsed geometry. \textbf{Middle column:} empirical RSMs at an early epoch (30) and a late epoch (10k). Early in training, the empirical geometry shows the graded semantic block structure of Sem-RSM. By the end of training, this structure weakens and the empirical RSM becomes close to the ETF geometry. \textbf{Right column:} Pearson correlation between Emp-RSM and each hypothesis RSM over training, for \(d \in \{4,8,16,32\}\). As in the Transformer experiments, alignment with Sem-RSM rises early and then declines, while alignment with ETF-RSM increases later. Larger hidden dimensions show both stronger early semantic alignment and more complete late convergence to the label-driven geometry.}
    \label{fig:cufm_language1_result}
\end{figure}
We next test the Spherical BoW model on the full Color-Shape Categories language from Experiment~1, described in \cref{sec:exp1}, rather than on the simplified analytic design matrix used in the direct simulation above. We use the same dataset as in the main experiment: \(n=800\) samples across \(K=8\) label classes, with one nuisance ID per sample and contextual tokens drawn from fine-grained color and shape variants. Thus, the training labels remain strictly one-hot, while the inputs still contain the latent semantic overlap structure of Experiment~1.

We train the Spherical BoW model on multi-hot encodings of these input contexts and track the empirical representational similarity matrix
throughout training. We sweep hidden dimension \(d \in \{4,8,16,32\}\), and compare the resulting empirical RSM to the same two hypothesis RSMs used in the main experiment: Sem-RSM, which reflects semantic overlap between classes, and ETF-RSM, which reflects the label-driven collapsed geometry.

The results in \Cref{fig:cufm_language1_result} match the Transformer dynamics from Experiment~1. For models with unconstrained dimension ($d>K$), early in training, the empirical RSM exhibits the block structure of Sem-RSM, and correlation with Sem-RSM rises. Later in training, this semantic structure weakens, correlation with Sem-RSM declines, and correlation with ETF-RSM increases monotonically toward the label-driven geometry.
As in Experiment~1, the hidden dimension controls how strongly both phases appear. Larger \(d\) leads to stronger early semantic alignment and more complete late convergence to ETF-RSM, while the smallest model (\(d=4\)) shows both a weaker semantic peak and a less complete late collapse. This again supports the view that the semantic phase is not caused by a representational bottleneck. Rather, even in this much simpler architecture, larger models recover the semantic geometry more strongly before later training shifts them toward the label-driven regime.

\subsection{Limitations}
We view our Spherical CBoW as a first step toward establishing analytical proxies for studying semantic-driven geometries, analogous to the role the UFM plays for label-driven NC geometries. In the NC literature, the UFM is valuable not only because it explains already-observed geometries, but even more so because it forms the basis for \textit{predicting} geometry in new settings (e.g.\ imbalanced data or soft labels). An analogous framework for semantic geometry would be highly desirable, but the challenge is substantial. The UFM captures the large-model regime by treating embeddings as free optimization variables — an assumption that is intuitive for sufficiently expressive models and has been rigorized in certain settings \citep{jacot2024wide,sukenik2025neural,garrod2024persistence}. However, this very assumption erases input semantic structure by design. At the same time, we have seen that architectural bottlenecking alone cannot explain the strong semantic geometry that emerges in larger models, which fully reflects the input data structure. Our Spherical BoW preserves this structure and reveals the transient semantic phase, but it lacks the intuitive expressiveness of the UFM. This is perhaps to be expected: for instance, its bag-of-words assumption yields an order-independent model that cannot distinguish two input contexts differing only in the positional arrangement of their semantic tokens. Extending the Spherical BoW framework to address these limitations is an exciting direction for future work, and we view both this model and our experiments as a starting point.

\end{document}